\newtheorem{theorem}{Theorem}
\newtheorem{cor}{Corollary}
\newtheorem{lemma}{Lemma}
\def\rank{{\mathrm{ rank}}}
\def\C{{\mathcal{C}}}
\def\N{{\mathcal{N}}}
\def\D{{\mathcal{D}}}
\DeclareMathOperator*{\argmin}{arg\,min}
\newcommand{\skal}[1]{\langle #1 \rangle}
\DeclareMathOperator{\diag}{diag}
\pgfplotsset{compat=1.14}
\def\reg{\mathcal{R}}
\def\A{\mathcal{A}}
\def\sigmavec{{\bf \sigma}}
\newcommand{\bX}{\bar{X}}
\newcommand{\bB}{\bar{B}}
\newcommand{\bC}{\bar{C}}
\newcommand{\tX}{\tilde{X}}
\newcommand{\tB}{\tilde{B}}
\newcommand{\tC}{\tilde{C}}
\newcommand{\fr}[2]{\frac{#1}{#2}}
\renewcommand{\vec}[1]{%
  \ifcat\noexpand#1\relax 
    \bm{#1} 
  \else
    \mathbf{#1} 
  \fi
}
\newcommand{\hada}{\odot}
\newcommand{\R}{\mathbbm{R}}
\newcommand{\norm}[1]{\left\|#1\right\|}
\newcommand{\T}{T}
\newcommand{\mat}[1]{#1}
\ificcvfinal\pagestyle{empty}\fi
\begin{document}

\title{Bilinear Parameterization For Differentiable Rank-Regularization}

\def\ww{5mm}
\author{Marcus Valtonen \"Ornhag$^1$ \hspace{\ww} Carl Olsson$^{1,2}$ \hspace{\ww} Anders Heyden$^1$\\[0.2 cm]
	\begin{minipage}[c]{0.4\textwidth}
		\centering
		${}^1$Centre for Mathematical Sciences\\
		Lund University 
	\end{minipage}
	\begin{minipage}[c]{0.4\textwidth}
	\centering
	${}^2$Department of Electrical Engineering\\
	Chalmers University of Technology 
\end{minipage}
	\\[0.4cm]
	{\tt\small  \{marcusv,\,calle,\,heyden\}@math.lth.se}
}

\maketitle

\begin{abstract}
Low rank approximation is a commonly occurring problem in many computer vision and machine learning applications.
There are two common ways of optimizing the resulting models. Either the set of matrices with a
given rank can be explicitly parametrized using a bilinear factorization, or low rank can be
implicitly enforced using regularization terms penalizing non-zero singular values. While the
former approach results in differentiable problems that can be efficiently optimized using local
quadratic approximation, the latter is typically not differentiable (sometimes even discontinuous) and requires first order subgradient or splitting methods. It is well known that gradient based methods exhibit slow convergence for ill-conditioned problems.

In this paper we show how many non-differentiable regularization methods can be reformulated into smooth objectives using bilinear parameterization. This allows us to  use standard second order methods, such as Levenberg--Marquardt (LM) and Variable Projection (VarPro), to achieve accurate solutions for ill-conditioned cases. 
We show on several real and synthetic experiments that our second order formulation converges to substantially more accurate solutions than competing state-of-the-art methods.
\end{abstract}

\section{Introduction}

Low rank models have been applied to numerous vision applications ranging from high level shape and deformation to pixel appearance models \cite{tomasi-kanade-ijcv-1992,bregler-etal-cvpr-2000,yan-pollefeys-pami-2008,garg-etal-cvpr-2013,basri-etal-ijcv-2007,garg-etal-ijcv-2013,wang-etal-2012,canyi-etal-cvpr-2014}.
When the sought rank is known, a commonly occurring formulation is the least squares minimization
\begin{equation}
\min_{\rank(X)\leq r} \|\A X-b\|^2,
\label{eq:fixedrankA}
\end{equation}
where $\A:\mathbb{R}^{m \times n} \rightarrow \mathbb{R}^p$ is a linear operator, and $\|\cdot\|$ is the standard Euclidean vector norm.
In general, this is a difficult non-convex problem and some versions are even known to be NP-hard \cite{gillis-glineur-siam-2011}.
In structure from motion, a popular approach \cite{buchanan-fitzgibbon-cvpr-2005} is to optimize over a bilinear factorization $X=BC^T$, where $B$ is $m \times r$ and $C$ is $n\times r$, and solve
\begin{equation}
\min_{B,\,C} \|\A BC^T-b\|^2.
\label{eq:knownrankbilin}
\end{equation}
Since the rank is bounded by the number of columns in $B$ and $C$ this approach explicitly parametrizes the set of matrices of rank $r$. 
While bilinear approaches often perform well \cite{hong-fitzgibbon-cvpr-2015,eriksson-hengel-pami-2012} they can have local minima~\cite{buchanan-fitzgibbon-cvpr-2005}. 
Recent works \cite{hong-fitzgibbon-cvpr-2015,hong-etal-eccv-2016,hong-etal-cvpr-2017,hong-zach-cvpr-2018} have, however, shown that properly implemented, LM and VarPro approaches are remarkably robust to local minima, achieve quadratic convergence and give impressive reconstruction results. Recently \cite{ge-etal-nips-2016,bhohanapali-nips-2016,ge-etal-arxiv-2017} was able to give conditions which guarantee that there are no "spurious" local minimizers (meaning that all local minimizers are close to or identical to the global solution). They use the notion of restricted isometry property (RIP) \cite{recht-etal-siam-2010} which assumes that the operator $\A$ fulfills 
\begin{equation}
(1-\delta_r)\|X\|_F^2 \leq \|\A X\|^2 \leq (1+\delta_r)\|X\|_F^2,
\label{eq:RIP}
\end{equation}
with $0\leq\delta_r<1$, if $\rank(X) \leq r$. If the isometry constant $\delta_r$ is sufficiently small \cite{ge-etal-nips-2016,ge-etal-arxiv-2017,bhohanapali-nips-2016} prove that every local minimizer is optimal (or near optimal). Similarly, for the matrix completion problem \cite{ge-etal-arxiv-2017} showed that there are no spurious local minima under uniformly distributed missing data. While the above theoretical assumptions generally do not hold for computer vision problems such as structure from motion, these results still give some intuition as to why bilinear parameterization often works well.

An alternative approach is to optimize directly over the entries of $X$ and enforce low rank using regularization terms. Applying a robust function $f$ to the singular values $\sigma_i(X) = 1,\ldots,N=\min(m,n)$ results in a low-rank inducing objective
\begin{equation}
\min_X \reg(X)+\|\A X-b\|^2,
\label{eq:generalregformulation}
\end{equation}
where
$
\reg(X) = \sum_{i=1}^{N} f(\sigma_i(X)).
$
Besides controlling the rank of the solution the generality of the function $f$ offers increased modeling capability compared to \eqref{eq:fixedrankA} and can for example be used to add priors on the size of the non-zero singular values.

The most popular regularization approach is undoubtedly the nuclear norm, $f(\sigma_i(X))=\sigma_i(X)$, due to its convexity \cite{fazel-etal-acc-2015,recht-etal-siam-2010,oymak2011simplified,candes-etal-acm-2011,candes2009exact}. 
Under the RIP assumption exact or approximate recovery with the nuclear norm can then be guaranteed \cite{recht-etal-siam-2010,candes2009exact}.
On the other hand, since it penalizes large singular values, it suffers from a shrinking bias \cite{cabral-etal-iccv-2013,canyi-etal-cvpr-2014,larsson-olsson-ijcv-2016}. Ideally $f$ should penalize small singular values (assumed to stem from measurement noise) harder than the large ones. Therefore  non-increasing derivatives on $[0,\infty)$, or concavity, has been shown to give stronger relaxations \cite{oymak-etal-2015,mohan2010iterative,hu-etal-pami-2013,oh-etal-pami-2016,canyi2015,toh-yun-2010,gu-2016}. These non-convex formulations usually only come with local convergence guarantees. Two exceptions are \cite{larsson-olsson-ijcv-2016,olsson-etal-iccv-2017} which gave optimality guarantees for \eqref{eq:generalregformulation} with $f=f_\mu$ as in \eqref{eq:fmu}.

The regularization term is generally not differentiable as a function of $X$. Thus, optimization methods based on local quadratic approximation become infeasible. 
Figure~\ref{fig:objfuns} gives a simple illustration on a 1-dimensional example of how non-differentiability occurs at the origin. In addition it is well known that the singular values become non-differentiable functions of the matrix elements when they are non distinct. To circumvent these issues subgradient and splitting methods are often employed \cite{canyi2015,toh-yun-2010,gu-2016,nie-2012,larsson-olsson-ijcv-2016}.
It is well known from basic optimization theory (\eg{}~\cite{boyd-vandenberghe-2004}) that gradient based methods exhibit slow convergence for ill-conditioned problems.
It has also been observed (\eg{}~\cite{boyd-etal-2011}) that splitting methods rapidly reduce the objective value the first couple of iterations, while convergence to the exact solution can be slow. In this paper we show that there are computer vision problems where these approaches make very little improvements at all, returning a solution that is far from optimal. In contrast, bilinear formulations with either LM or VarPro can be made to yield accurate results in few iterations \cite{hong-fitzgibbon-cvpr-2015}.

An alternative approach that unifies bilinear parameterization with regularization approaches is based on the observation \cite{recht-etal-siam-2010} that the nuclear norm $\|X\|_*$ of a matrix $X$ can be expressed as
$
\|X\|_* = \min_{BC^T = X} \frac{\|B\|_F^2+\|C\|_F^2}{2}.
$
Thus when $f(\sigma_i(X)) = \mu \sigma_i(X)$, where $\mu$ is a scalar controlling the strength of the regularization, optimization of \eqref{eq:generalregformulation} can be formulated as 
\begin{equation}
\min_{B,C} \mu \frac{\|B\|_F^2+\|C\|_F^2}{2}+ \|\A BC^T - b\|^2.
\label{eq:nuclearbilin}
\end{equation} 
Optimizing directly over the factors has the advantages that the number of variables is much smaller and one may add constraints if a particular factorization is sought. Surprisingly, while 
\eqref{eq:nuclearbilin} is non-convex, using the convexity 
of the underlying regularization problem \eqref{eq:generalregformulation} it can be shown that any local minimizer $B$,$C$ with $\rank(B C^T) < k$, where $k$ is the number of columns in $B$ and $C$, is globally optimal \cite{bach-arxiv-2013,haeffele-vidal-arxiv-2017}. 
Additionally, the objective function is two times differentiable and second order methods can be employed. 
\begin{figure*}
\centering
\def\arraystretch{0.75}
\def\w{27mm}
\begin{tabular}{cccccc}
SCAD \cite{fan2001variable}: & Log \cite{friedman-2012}: & MCP \cite{zhang2010nearly}: & ETP \cite{gao-etal-AAAI-2011}: & Geman \cite{geman-yang-1995}: \\
\includegraphics[width=\w]{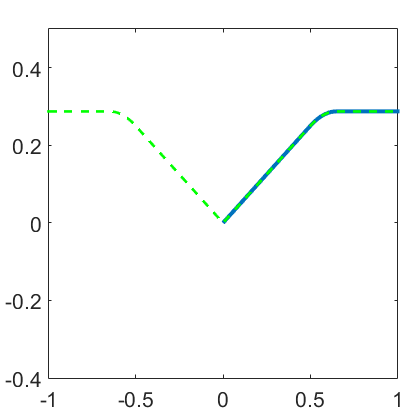} &
\includegraphics[width=\w]{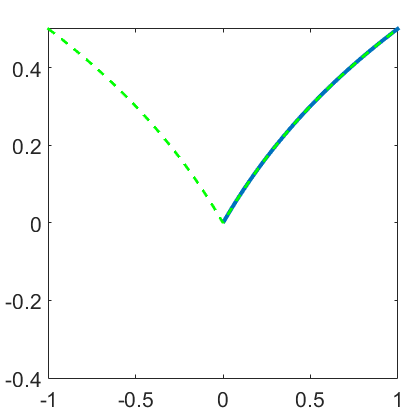} &
\includegraphics[width=\w]{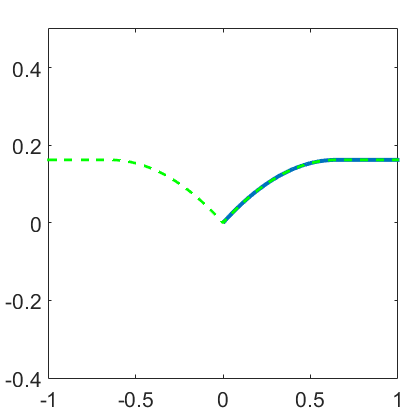} &
\includegraphics[width=\w]{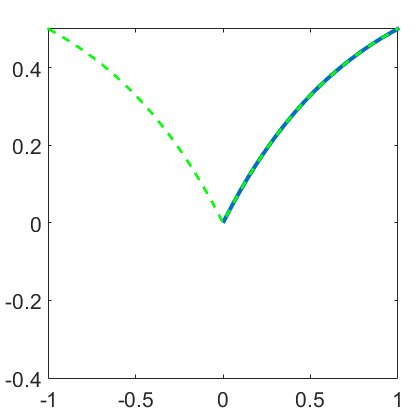} &
\includegraphics[width=\w]{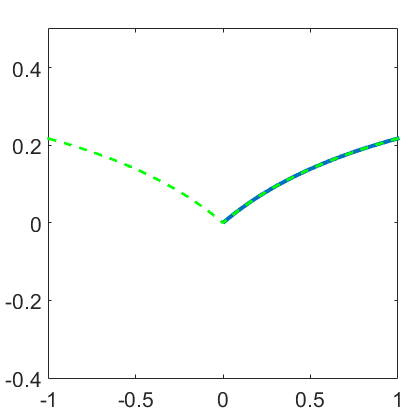}
\end{tabular}
\caption{A few commonly occurring robust penalties of the form $f(\sigma)$, with $\sigma \in [0,\infty)$ and $f$ differentiable everywhere (blue graph). 
The green dashed graph shows how non-differentiability occurs at the origin when applying the penalty to a $1 \times 1$ matrix $x\in \mathbb{R}$. In this case $\sigma(x)=|x|$ and therefore $f(\sigma(x)) = f(|x|)$. Note also that \eqref{eq:fmu} is a special case of MCP.}
\label{fig:objfuns}
\end{figure*}

In this paper we develop new regularizing terms that, similar to \eqref{eq:nuclearbilin}, work on the bilinear factors. However, in contrast to previous approaches we investigate formulations that exhibit less shrinking bias and go beyond convex penalties. Specifically, we prove that $\reg(X)  = \min_{X=BC^T} \tilde{\reg}(B,C)$, where 
\begin{equation}
\tilde{\reg}(B,C) = \sum_{i=1}^{k} f\left(\frac{\|B_i\|^2+\|C_i\|^2}{2}\right),
\end{equation}
$k$ is the number of columns, and $B_i$ and $C_i$ are the $i$:th columns of $B$ and $C$, respectively. The result holds for a general class of concave penalty functions $f$, a few of which are illustrated in Figure~\ref{fig:objfuns}. 
In view of the above result, we propose to minimize  
\begin{equation}
\tilde{\reg}(B,C)+\|\A B C^T - b\|^2.
\label{eq:generalbilin}
\end{equation}
Rather than resorting to splitting or subgradient methods we present an algorithm that uses a quadratic approximation of the objective.
Under the assumption that $f$ is differentiable, we show that
our quadratic approximation reduces to a weighted version of \eqref{eq:nuclearbilin} to which we can apply VarPro.
We show on several computer vision problems that our approach outperforms state-of-the-art methods such as \cite{shang-etal-2018,canyi2015,toh-yun-2010,gu-2016,boyd-etal-2011}.

While our problem is non-convex (both in the $X$ parameterization \eqref{eq:generalregformulation} and in the $B$, $C$ parameterization \eqref{eq:generalbilin}) we show that in some cases it is still possible to give global optimality guarantees. 
Building on the results of \cite{olsson-etal-iccv-2017} we characterize the local minima of the new formulation with the choice 
\begin{equation}
f(x) = f_\mu(x) := \mu - \max(\sqrt{\mu}-x,0)^2.
\label{eq:fmu}
\end{equation}
Specifically, for this choice, we give conditions that ensure that when a RIP constraint~\cite{recht-etal-siam-2010} holds a local minimizer of \eqref{eq:generalbilin} is a global solution of both
\begin{equation}
\min_{\rank(X)\leq r} \reg(X) + \|\A X - b\|^2,
\label{eq:regminprobl}
\end{equation}
where $\reg(X) = \sum_i f_\mu(\sigma_i(X))$, and 
\begin{equation}
\min_{\rank(X)\leq r} \mu \rank(X) + \|\A X - b\|^2.
\label{eq:rankminprobl}
\end{equation}
In summary our main contributions are:
\begin{itemize}
\item A new stronger non-convex regularization term for bilinear parameterizations with less/no shrinking bias.
\item A new iteratively reweighed VarPro algorithm optimizing accurate quadratic approximations.
\item Theoretical conditions that guarantee optimal recovery under the RIP constraint.
\item An experimental evaluation that shows that our methods outperforms state-of-the-art methods on several real computer vision problems.
\end{itemize}

\subsection{Related Work}
Our work is very much inspired by a recent series of papers by Hong \etal \cite{hong-fitzgibbon-cvpr-2015,hong-etal-eccv-2016,hong-etal-cvpr-2017,hong-zach-cvpr-2018} which show that bilinear formulations can be made remarkably robust to local minima, and achieve impressive reconstruction results for uncalibrated structure from motion problems, using the so called VarPro method.  
Our work represents an attempt to unify this line of work with regularization based alternatives, leveraging the benefits of them both.

An approach that is closely related to ours is that of \cite{cabral-etal-iccv-2013} which uses \eqref{eq:nuclearbilin} to unify the use of a regularized objective and factorization. They show that if the obtained solution has lower rank than its number of columns it is globally optimal. In practice \cite{cabral-etal-iccv-2013} observes that the shrinking bias of the nuclear norm makes it too weak to enforce a low rank when the data is noisy. Therefore, a ``continuation'' approach where the size of the factorization is gradually reduced is proposed. While this yields solutions with lower rank, the optimality guarantees no longer apply. 

Bach \etal \cite{bach-arxiv-2013} showed that
\begin{equation}
\|X\|_{s,t}:=\min_{X=BC^T} \sum_{i=1}^k\frac{\|B_i\|_s^2 + \|C_i\|_t^2}{2},
\label{eq:decomposition-norm}
\end{equation}
is convex for any choice of vector norms $\|\cdot\|_s$ and \mbox{$\|\cdot\|_t$}.
In \cite{haeffele-vidal-arxiv-2017} it was shown that a more general class of 2-homogeneous factor penalties result in a convex regularization similar to \eqref{eq:decomposition-norm}. The property that a local minimizer $B$, $C$ with $\rank(B C^T) < k$, is also extended to this case. Still, because of convexity, it is clear that these formulations will suffer from a similar shrinking bias as the nuclear norm.
Shang \etal \cite{shang-etal-2018} showed that penalization with the Schatten semi-norms $\|X\|_q = \sqrt[q]{\sum_{i=1}^N \sigma_i(X)^q}$, for $q=1/2$ and $2/3$, can be achieved using a convex penalty on the factors $B$ and $C$. A generalization to general values of $q$ is given in \cite{xu-etal-AAAI-2017}.
While this reduces shrinking bias to some extent, it results in a non-differentiable and non-convex formulation that is optimized with ADMM.

It is important to note that many of the above methods that are considered state-of-the-art have been developed for low level vision tasks such as image denoising, inpainting, alignment and background subtraction. The ground truth for these models are often of higher rank
than models in~\eg{}~structure from motion, making it possible to obtain good results with weaker regularization. Additionally, as we will see in the experiments, more difficult data terms prevent rapid convergence of the splitting methods they often employ. 

\section{Non-Convex Penalties and Shrinking Bias}
In this section we will show how to formulate regularization terms of the type

\begin{equation}
\reg (X) = \sum_{i=1}^N f(\sigma_i(X)),
\label{eq:regdef}
\end{equation}
by penalizing the factors of the factorization~$X=BC^T$.
We assume that $B$ and $C$ have $k$ columns, making $\sigma_i(X)=0$ if $i>k$ and $\rank(X)\leq k$. Note, however, that we are aiming to achieve a lower rank using the regularization term. In many applications, the sought rank is unknown and should be determined by the regularization. We therefore set $k$ large enough not to exclude the optimal solution. As we shall see in Section~\ref{sec:optloc}, this ability to over-parameterize can be used to ensure optimality.

\begin{theorem}\label{thm:main}
If $f$ is concave, non-decreasing on $[0,\infty)$ and $f(0)=0$ then 
\begin{equation}
\reg(X) = \min_{BC^T = X} \sum_{i=1}^k f(\|B_i\| \|C_i\|), 
\label{eq:bilinear1}
\end{equation}
where $B_i$ and $C_i$, $i=1,...,k$ are the columns of $B$ and $C$ respectively.
\end{theorem}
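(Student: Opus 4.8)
The plan is to prove the two inequalities in \eqref{eq:bilinear1} separately. The easy direction, that the right-hand side is $\le\reg(X)$ (and the minimum is attained), is handled by exhibiting a concrete factorization from the SVD: writing $X=\sum_{i=1}^N\sigma_i(X)u_iv_i^T$ and setting $B_i=\sqrt{\sigma_i(X)}\,u_i$, $C_i=\sqrt{\sigma_i(X)}\,v_i$ for $i\le N$ and $B_i=C_i=0$ for $N<i\le k$, one gets $BC^T=X$, $\|B_i\|\|C_i\|=\sigma_i(X)$, and the padding columns contribute $f(0)=0$, so $\sum_{i=1}^k f(\|B_i\|\|C_i\|)=\sum_{i=1}^N f(\sigma_i(X))=\reg(X)$.

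The substantive direction is the lower bound $\sum_{i=1}^k f(\|B_i\|\|C_i\|)\ge\reg(X)$ for \emph{every} factorization $X=BC^T$. Here I would write $X=\sum_{i=1}^k B_iC_i^T$ as a sum of rank-one matrices, each with single nonzero singular value $\|B_i\|\|C_i\|$, and note that, with $\Phi(Y):=\sum_j f(\sigma_j(Y))$ (so $\Phi(X)=\reg(X)$ since $f(0)=0$), the claim reduces to the subadditivity
\[
\Phi\Big(\textstyle\sum_i B_iC_i^T\Big)\ \le\ \sum_i\Phi\big(B_iC_i^T\big)\ =\ \sum_i f(\|B_i\|\|C_i\|).
\]
So everything comes down to $\Phi(A+B)\le\Phi(A)+\Phi(B)$ for arbitrary matrices $A,B$, and this is exactly where concavity, monotonicity and $f(0)=0$ are needed; I expect this Rotfel'd-type spectral inequality to be the main obstacle (note it is genuinely not a pure majorization fact — e.g.\ $2e_1e_1^T$ has a singular value larger than either rank-one summand — and relies on concavity of $f$).

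To prove $\Phi$ subadditive I would use the integral representation of a concave non-decreasing $f$ with $f(0)=0$, namely $f(x)=cx+\int_0^\infty\min(x,t)\,d\nu(t)$ with $c=f'(\infty)\ge0$ and $\nu$ a non-negative measure. Substituting the singular values and interchanging sum and integral (Tonelli, everything non-negative) gives $\Phi(Y)=c\|Y\|_*+\int_0^\infty\psi_t(Y)\,d\nu(t)$ with $\psi_t(Y):=\sum_j\min(\sigma_j(Y),t)$, so subadditivity of $\Phi$ follows from that of the nuclear norm (triangle inequality) together with subadditivity of each $\psi_t$. For $\psi_t$ the key is the variational identity $\psi_t(Y)=\min_R\big(\|Y-R\|_*+t\,\rank(R)\big)$, which holds because $\min_{\rank R\le\ell}\|Y-R\|_*=\sum_{j>\ell}\sigma_j(Y)$ (Mirsky/Eckart--Young) and $\min_\ell\big(\sum_{j>\ell}\sigma_j(Y)+t\ell\big)=\sum_j\min(\sigma_j(Y),t)$; then, choosing optimal $R_A,R_B$ for $A,B$ and using $R_A+R_B$ as a feasible point for $A+B$ gives $\psi_t(A+B)\le\|A-R_A\|_*+\|B-R_B\|_*+t(\rank R_A+\rank R_B)=\psi_t(A)+\psi_t(B)$. (Alternatively, one may simply cite the classical Rotfel'd inequality, or argue by induction on the number of rank-one terms using Weyl's singular-value perturbation bounds and concavity of $f$.) The remaining points are routine: the rank-one reduction, the over-parameterized case $k>\rank(X)$ via zero padding, and the measurability needed for the interchange of sum and integral.
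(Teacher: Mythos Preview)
Your proposal is correct and follows essentially the same route as the paper: both arguments reduce the lower bound to the subadditivity $\Phi(A+B)\le\Phi(A)+\Phi(B)$ for $\Phi(Y)=\sum_j f(\sigma_j(Y))$ applied to the rank-one summands $B_iC_i^T$, and both establish attainment via the SVD factorization $B_i=\sqrt{\sigma_i(X)}U_i$, $C_i=\sqrt{\sigma_i(X)}V_i$. The only difference is that the paper cites the subadditivity directly (Theorem~4.4 of Uchiyama, \emph{Subadditivity of eigenvalue sums}), whereas you give a self-contained proof via the integral representation $f(x)=cx+\int_0^\infty\min(x,t)\,d\nu(t)$ and the variational identity $\psi_t(Y)=\min_R\bigl(\|Y-R\|_*+t\,\rank(R)\bigr)$; your argument is sound and has the merit of being elementary and avoiding external references.
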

\begin{proof}
The result is a consequence of the fact that $\reg$ will fulfill a triangle inequality $\reg(X+Y)\leq \reg(X)+\reg(Y)$ under the assumptions on $f$. This is clear from Theorem~4.4 in~\cite{uchiyama-2005} which shows that 
\begin{equation}
\sum_{i=1}^N f(\sigma_i(X+Y)) \leq \sum_{i=1}^N (f(\sigma_i(X))+\sum_{i=1}^N f(\sigma_i(Y))). 
\end{equation}
Applying this to $X = BC^T = \sum_{i=1}^k B_i C_i^T$ we see that
\begin{equation}
\reg(X) = \reg(\sum_{i=1}^k B_i C_i^T) \leq \sum_{i=1}^k \reg(B_i C_i^T).
\end{equation}
Since $\rank(B_i C_i^T)=1$ we also have
\begin{equation}
\reg(B_i C_i^T) = f(\sigma_1(B_i C_i^T)) = f(\|B_i C_i^T\|_F). 
\end{equation}
Lastly, since $\|B_i C_i^T\|_F = \|B_i\| \|C_i\|$ we get
\begin{equation}
\reg(X) \leq\sum_{i=1}^k f(\|B_i\| \|C_i\|). 
\end{equation}
To see that equality can be achieved, let \mbox{$B_i = \sqrt{\sigma_i(X)}U_i$} and $C_i = \sqrt{\sigma_i(X)}V_i$, where $X = \sum_{i=1}^k \sigma_i(X) U_i V_i^T$ is the SVD of $X$.
Then, $BC^T=X$ and $f(\|B_i\|\|C_i\|) = f(\sigma_i(X))$.
\end{proof}

While the above result allows optimization over the factors $B$ and $C$ we note that it yields an objective that is non-differentiable at $\|B_i\|\|C_i\| = 0$.
Next we reformulate the objective to achieve a differentiable problem formulation.

\begin{cor}\label{cor:main} Under the assumptions of Theorem~\ref{thm:main}, it follows that
$\reg(X) = \min_{X=BC^T} \tilde{\reg}(B,C)$, where
\begin{equation}
\tilde{\reg}(B,C) = \sum_{i=1}^k f\left(\frac{\|B_i\|^2+\|C_i\|^2}{2}\right).
\label{eq:Rtildedef}
\end{equation}
If $f$ is differentiable then $\tilde{R}(B,C)$ is also differentiable.
\end{cor}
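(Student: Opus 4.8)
The plan is to obtain the identity directly from Theorem~\ref{thm:main} by sandwiching $\tilde{\reg}$ between $\sum_{i=1}^k f(\|B_i\|\|C_i\|)$ and $\reg(X)$. First I would apply the arithmetic--geometric mean inequality columnwise, $\tfrac{1}{2}(\|B_i\|^2+\|C_i\|^2)\ge \|B_i\|\|C_i\|$. Since $f$ is non-decreasing on $[0,\infty)$, it preserves this inequality, so $\tilde{\reg}(B,C)\ge \sum_{i=1}^k f(\|B_i\|\|C_i\|)$ for every factorization $X=BC^T$. Chaining this with the bound $\sum_{i=1}^k f(\|B_i\|\|C_i\|)\ge \reg(X)$ proved in Theorem~\ref{thm:main} gives $\tilde{\reg}(B,C)\ge \reg(X)$ for all admissible $B,C$, and hence $\min_{X=BC^T}\tilde{\reg}(B,C)\ge \reg(X)$.

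For the reverse inequality I only need to exhibit one factorization that attains $\reg(X)$, and the construction from the proof of Theorem~\ref{thm:main} already does this: with the SVD $X=\sum_{i=1}^k \sigma_i(X)U_iV_i^T$, setting $B_i=\sqrt{\sigma_i(X)}\,U_i$ and $C_i=\sqrt{\sigma_i(X)}\,V_i$ yields $\|B_i\|=\|C_i\|=\sqrt{\sigma_i(X)}$, so the AM--GM step is tight, $\tfrac12(\|B_i\|^2+\|C_i\|^2)=\sigma_i(X)$, and therefore $\tilde{\reg}(B,C)=\sum_{i=1}^k f(\sigma_i(X))=\reg(X)$. (Equivalently, any factorization can be brought to this balanced form by the norm-preserving rescaling $B_i\mapsto\lambda_i B_i$, $C_i\mapsto\lambda_i^{-1}C_i$ with $\lambda_i^2=\|C_i\|/\|B_i\|$, the case $\|B_i\|\|C_i\|=0$ being handled by zeroing that column since then $B_iC_i^T=0$.) This establishes $\reg(X)=\min_{X=BC^T}\tilde{\reg}(B,C)$.

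For the differentiability claim I would simply note that $\|B_i\|^2=\sum_j B_{ji}^2$ and $\|C_i\|^2=\sum_j C_{ji}^2$ are polynomials in the entries of $B$ and $C$, hence $C^\infty$ on all of $\R^{m\times k}\times\R^{n\times k}$ --- in particular at $B_i=0$ or $C_i=0$, precisely the points where $\|B_i\|$ and $\|B_i\|\|C_i\|$ are non-differentiable. Each summand of $\tilde{\reg}$ is then a composition of the differentiable function $f$ with this smooth map, and a finite sum of differentiable functions is differentiable.

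I do not expect a real obstacle here; the proof is short once Theorem~\ref{thm:main} is in hand. The only points requiring a little care are making the ``the minimum is attained'' step explicit (it is cleanest to re-use the SVD construction rather than to rescale a general factorization) and stating clearly why passing from $\|B_i\|$ to $\|B_i\|^2$ is exactly what removes the non-smoothness at the origin that was present in Theorem~\ref{thm:main}.
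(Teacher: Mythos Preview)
Your proposal is correct and follows essentially the same approach as the paper: both apply the AM--GM inequality $\|B_i\|\|C_i\|\le\tfrac12(\|B_i\|^2+\|C_i\|^2)$ together with the monotonicity of $f$ to compare $\tilde{\reg}$ with the quantity in Theorem~\ref{thm:main}, exhibit equality via the balanced SVD factorization $B_i=\sqrt{\sigma_i(X)}\,U_i$, $C_i=\sqrt{\sigma_i(X)}\,V_i$, and invoke the chain rule for differentiability. Your write-up is somewhat more explicit (in particular about why passing to squared norms removes the non-smoothness at zero), but the argument is the same.
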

\begin{proof}
By the rule of arithmetic and geometric means 
\begin{equation}
\|B_i\|\|C_i\| \leq \frac{1}{2} (\|B_i\|^2+\|C_i\|^2),
\end{equation}
with equality if $\|B_i\| = \|C_i\|$ which is achieved when \mbox{$B_i = \sqrt{\sigma_i(X)}U_i$} and $C_i = \sqrt{\sigma_i(X)}V_i$.
Since $f$ is assumed to be non-decreasing,
it follows from~\eqref{eq:bilinear1}, that
$\reg(X) = \min_{X=BC^T} \tilde{\reg}(B,C)$.
The differentiability of $\tilde{\reg}(B,C)$ is now trivially checked using the chain rule.
\end{proof}

\newcommand{\STAB}[1]{\begin{tabular}{@{}c@{}}#1\end{tabular}} 

\begin{table*}[htpb]
\centering
\caption{Distance to ground truth (normalized) mean valued over 20 problem instances for
different percentages of missing data, missing data patterns and noise levels~$\sigma$. Best results are marked in bold.}
\setlength\tabcolsep{0.1cm}
{\footnotesize
\rowcolors{2}{blue!15}{white!10}
\begin{tabular}{r | rrrrrr | rr | rr | r}

Missing\\data (\%) & PCP~\cite{candes-etal-acm-2011} & WNNM~\cite{gu-2016} &
Unifying~\cite{cabral-etal-iccv-2013} & LpSq~\cite{nie-2012} &
S12L12~\cite{shang-etal-2018} & S23L23~\cite{shang-etal-2018} &
IRNN~\cite{canyi2015} & APGL~\cite{toh-yun-2010} &
$\norm{\cdot}_*$~\cite{boyd-etal-2011}  & $\reg$~\cite{larsson-olsson-ijcv-2016} & Our\\

\toprule 
0 & \textbf{0.0000} & \textbf{0.0000} & \textbf{0.0000} & \textbf{0.0000} & 0.0002 & 0.0002 & \textbf{0.0000} & \textbf{0.0000} & 0.1727 & \textbf{0.0000} & \textbf{0.0000} \\
10 &0.0885 & 0.0028 & 0.0713 & 0.0213 & 0.0309 & 0.0071 & \textbf{0.0000} & \textbf{0.0000} & 0.1998 & \textbf{0.0000} & \textbf{0.0000} \\
20 &0.2720 & 0.2220 & 0.1491 & 0.0170 & 0.0412 & 0.0209 & \textbf{0.0000} & \textbf{0.0000} & 0.2223 & 0.0128 & \textbf{0.0000} \\
30 &0.7404 & 0.4787 & 0.7499 & 0.0003 & 0.0818 & 0.0895 & \textbf{0.0000} & 0.0014 & 0.2897 & 0.2346 & \textbf{0.0000} \\
40 &1.0000 & 0.6097 & 0.9553 & 0.1083 & 0.1666 & 0.1360 & \textbf{0.0000} & 0.0017 & 0.3374 & 0.2198 & \textbf{0.0000} \\
\multirow{-6}{*}{\STAB{\rotatebox[origin=c]{90}{\scriptsize Uniform ($\sigma=0.0$)}}\phantom{AA}}
50 &1.0000 & 0.7170 & 1.0000 & 0.0315 & 0.1376 & 0.1001 & 0.0003 & 0.0301 & 0.4266 & 0.2930 & \textbf{0.0000}
\\

%

\midrule 

0 & \textbf{0.0000} & \textbf{0.0000} & \textbf{0.0000} & \textbf{0.0000} & 0.0002 & 0.0002 & \textbf{0.0000} & \textbf{0.0000} & 0.1810 & \textbf{0.0000} & \textbf{0.0000} \\
10 & 0.3160 & 0.2734 & 0.1534 & 0.0839 & 0.1296 & 0.1233 & 0.0772 & 0.0834 & 0.2193 & 0.0793 & \textbf{0.0658} \\
20 & 0.4877 & 0.4499 & 0.3017 & 0.1650 & 0.2389 & 0.2456 & \textbf{0.1010} & 0.1786 & 0.3436 & 0.2494 & 0.1018 \\
30 & 0.5821 & 0.5395 & 0.5486 & 0.2520 & 0.3289 & 0.3160 & \textbf{0.1189} & 0.2572 & 0.4299 & 0.3421 & \textbf{0.1189} \\
40 & 0.7072 & 0.6317 & 0.7376 & 0.2853 & 0.4084 & 0.4110 & 0.1417 & 0.2913 & 0.4825 & 0.5004 & \textbf{0.1385} \\
\multirow{-6}{*}{\STAB{\rotatebox[origin=c]{90}{\scriptsize Tracking ($\sigma=0.0$)}}\phantom{AA}}
50 & 0.8125 & 0.7257 & 0.9521 & 0.4178 & 0.4267 & 0.4335 & 0.2466 & 0.4047 & 0.5754 & 0.6503 & \textbf{0.2214} \\

\midrule 

0 & 0.0409 & 0.0207 & 0.0407 & 0.0450 & 0.0437 & 0.0435 & 0.0448 & 0.0191 & 0.1581 & \textbf{0.0166} & \textbf{0.0166}  \\
10 &0.3157 & 0.2734 & 0.1585 & 0.0848 & 0.0529 & 0.0518 & 0.0625 & 0.0696 & 0.2312 & 0.0488 & \textbf{0.0438}  \\
20 &0.4771 & 0.4338 & 0.3480 & 0.1394 & 0.0995 & \textbf{0.0982} & 0.1090 & 0.1188 & 0.3109 & 0.2071 & 0.0983  \\
30 &0.5801 & 0.5225 & 0.4726 & 0.2026 & 0.2468 & 0.2592 & 0.1646 & 0.1993 & 0.3820 & 0.3465 & \textbf{0.1475}  \\
40 &0.7122 & 0.6148 & 0.8638 & 0.2225 & 0.3292 & 0.3252 & 0.1357 & 0.2110 & 0.4800 & 0.4599 & \textbf{0.1273}  \\
\multirow{-6}{*}{\STAB{\rotatebox[origin=c]{90}{\scriptsize Tracking ($\sigma=0.1$)}}\phantom{AA}}
50 &0.7591 & 0.6819 & 0.9216 & 0.4105 & 0.4883 & 0.4811 & 0.3342 & 0.3639 & 0.5652 & 0.5930 & \textbf{0.3329}  \\

\bottomrule
\end{tabular}
\label{tab:synth}
}
\end{table*}

We are particularly interested in the case \eqref{eq:fmu}
since, with this choice, it is known that the global minimizer of \eqref{eq:generalregformulation} is the same as that of $\mu \rank(X) +\|\A X - b\|^2$ if $\|\A\| < 1$, see \cite{carlsson2016convexification} for a proof. 
Note that $f_\mu$ is a special case of the MCP class \cite{zhang2010nearly}.
With this choice $\tilde{\reg}(B,C)$ is differentiable and the second derivatives are also defined almost everywhere except in the transition $\frac{\|B_i\|^2+\|C_i\|^2}{2} = \sqrt{\mu}$ where the function switches from quadratic to constant.

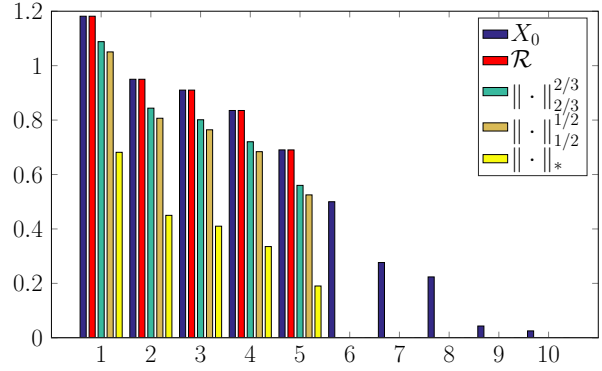
\begin{figure}[htb]
\centering
\resizebox{!}{50mm}{
%
%
\definecolor{mycolor1}{rgb}{0.20810,0.16630,0.52920}%
\definecolor{mycolor2}{rgb}{0.21783,0.72504,0.61926}%
\definecolor{mycolor3}{rgb}{0.85066,0.72990,0.33603}%
\definecolor{mycolor4}{rgb}{0.97630,0.98310,0.05380}%
\begin{tikzpicture}
\LARGE
\begin{axis}[%
width=5.25in,
height=3.138in,
at={(0.408in,0.306in)},
scale only axis,
bar shift auto,
xmin=0,
xmax=11,
xtick={ 1,  2,  3,  4,  5,  6,  7,  8,  9, 10},
ymin=0,
ymax=1.2,
axis background/.style={fill=white},
legend style={legend cell align=left, align=left, draw=white!15!black}
]
\addplot[ybar, bar width=0.123, fill=mycolor1, draw=black, area legend] table[row sep=crcr] {%
1	1.18161821420815\\
2	0.94997555258641\\
3	0.910115362069281\\
4	0.835249882476506\\
5	0.690395320608537\\
6	0.5\\
7	0.276619801327966\\
8	0.223747934287232\\
9	0.0433753911802577\\
10	0.0254059860675901\\
};
\addplot[forget plot, color=white!15!black] table[row sep=crcr] {%
0	0\\
11	0\\
};
\addlegendentry{$X_0$}

\addplot[ybar, bar width=0.123, fill=red, draw=black, area legend] table[row sep=crcr] {%
1	1.18161821420815\\
2	0.94997555258641\\
3	0.910115362069281\\
4	0.835249882476506\\
5	0.690395320608537\\
6	0\\
7	0\\
8	0\\
9	0\\
10	0\\
};
\addplot[forget plot, color=white!15!black] table[row sep=crcr] {%
0	0\\
11	0\\
};
\addlegendentry{$\reg$}

\addplot[ybar, bar width=0.123, fill=mycolor2, draw=black, area legend] table[row sep=crcr] {%
1	1.08815083489349\\
2	0.843836433346921\\
3	0.801187889278495\\
4	0.720374936962404\\
5	0.560119346571341\\
6	0\\
7	0\\
8	0\\
9	0\\
10	0\\
};
\addplot[forget plot, color=white!15!black] table[row sep=crcr] {%
0	0\\
11	0\\
};
\addlegendentry{$\|\cdot\|_{2/3}^{2/3}$}

\addplot[ybar, bar width=0.123, fill=mycolor3, draw=black, area legend] table[row sep=crcr] {%
1	1.05045472634362\\
2	0.806748163633621\\
3	0.764282966576485\\
4	0.683915382564861\\
5	0.525129406405147\\
6	0\\
7	0\\
8	0\\
9	0\\
10	0\\
};
\addplot[forget plot, color=white!15!black] table[row sep=crcr] {%
0	0\\
11	0\\
};
\addlegendentry{$\|\cdot\|_{1/2}^{1/2}$}

\addplot[ybar, bar width=0.123, fill=mycolor4, draw=black, area legend] table[row sep=crcr] {%
1	0.681618214208152\\
2	0.44997555258641\\
3	0.410115362069281\\
4	0.335249882476506\\
5	0.190395320608537\\
6	0\\
7	0\\
8	0\\
9	0\\
10	0\\
};
\addplot[forget plot, color=white!15!black] table[row sep=crcr] {%
0	0\\
11	0\\
};
\addlegendentry{$\|\cdot\|_*$}

\end{axis}
\end{tikzpicture}
\caption{Singular values obtained when minimizing $\|X-X_0\|_F^2$ with the four regularizers $\reg(X)$ with $f=f_\mu$, $\|X\|_{1/2}^{1/2}$, $\|X\|_{2/3}^{2/3}$ and $\|X\|_*$. Large singular values are left unchanged by $\reg$.}
\label{fig:bias}
\end{figure}
We conclude this section by comparing the shrinking bias of our approach and three others that can also be optimized over the factorization. 
Theorem~\ref{thm:main} makes it possible to compute the global optimizer of $\tilde{\reg}(B,C)+\|BC^T-X_0\|_F^2$ since the equivalent problem
$\reg(X)+\|X-X_0\|_F^2$ has closed form solution in the $X$-parameterization. It is shown in~\cite{larsson-olsson-ijcv-2016} that with $f=f_\mu$ the solution is obtained by thresholding the singular values at $\sqrt{\mu}$. 
Similarly, closed form solutions are also available when regularizing $\|X-X_0\|_F^2$ with $\|\cdot\|_{1/2}$, $\|\cdot\|_{2/3}$ and  $\|\cdot\|_*$ \cite{shang-etal-2018}.
In Figure~\ref{fig:bias} we show the singular values obtained when regularizing $\|X-X_0\|_F^2$ with these four options, and for comparison the singular values of $X_0$.
For all methods we have selected regularization weights as small as possible so that the five smallest singular values are completely suppressed, which minimizes the bias.
While all choices, except $\reg$, subtract a part from the singular values that should be retained, the Schatten norms reduce the bias significantly compared to the nuclear norm. For the Schatten norms the bias is larger for singular values that are close to the threshold since 
the derivative of $\sigma^q$, $0<q<1$, decreases with increasing $\sigma$. For problem instances where there is a clear separation in size between singular values that should be retained and those that should be suppressed, it is likely that this can be done with negligible bias. Since $f'_\mu(\sigma)=0$ when $\sigma \geq \sqrt{\mu}$ this method does not affect the first five singular values.

\section{Overparameterization and Optimality}\label{sec:optloc}
The results of the previous section show that a global
optimizer $(B,C)$ of \eqref{eq:generalbilin} 
gives a solution $BC^T$ which is globally optimal in~\eqref{eq:generalregformulation}.
On the other hand, optimizing \eqref{eq:generalbilin} over $B$ and $C$ introduces additional stationary points, due to the non-linear parameterization, that are not present in \eqref{eq:generalregformulation}. One such point is $(B,C)=(0,0)$ where the gradients of $\|\A BC^T - b\|^2$ with respect to $B$ and $C$ vanish (in contrast to the gradient \wrt{}~$X$).
In this section we show that by overparametrizing, in the sense that we use $B$ and $C$ with more columns than the rank of the solution we seek, it is still possible to use properties of \eqref{eq:generalregformulation} to show optimality in \eqref{eq:generalbilin}.
We will exclusively use $f_\mu$ from \eqref{eq:fmu},
assume that $B$ and $C$ have $2k$ columns and study locally optimal solutions with $\rank(BC^T)<k$. The size of $B$ and $C$ makes it possible to parametrize line segments between such points and utilize convexity properties, see proof of Theorem~\ref{thm:lowrank-opt}. 
The following result (which is proven in Appendix~\ref{sec:proofs}) gives conditions that ensure that local minimality in \eqref{eq:generalbilin} implies that  \eqref{eq:generalregformulation} grows in all ``low rank'' directions.
\begin{theorem}\label{thm:dirderiv}
	Assume that~$(\bar{B},\bar{C})\in \mathbb{R}^{m\times 2k} \times \mathbb{R}^{n\times 2k} $,
where $\bar{B}=U\sqrt{\Sigma}$ and $\bar{C}=V\sqrt{\Sigma}$, and $\bar{X}=U\Sigma V^{\T}$,
is a local minimizer of \eqref{eq:generalbilin}
	with $\rank(\bar{X})<k$ and let
	$\N(X) = \reg(X)+\|\A X- b\|^2.$
	Then $\reg(\bar{X}) = \tilde{\reg}(\bar{B},\bar{C})$ and the directional derivatives $\N'_{\Delta X}(\bar{X})$, where $\Delta X = \tilde{X}-\bar{X}$ and $\rank(\tilde{X}) \leq k$, are non-negative.
\end{theorem}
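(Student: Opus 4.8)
The plan is to handle the two assertions separately. The equality $\reg(\bar X)=\tilde{\reg}(\bar B,\bar C)$ is immediate from Corollary~\ref{cor:main}: the minimum defining $\reg(\bar X)=\min_{\bar X=BC^{\T}}\tilde{\reg}(B,C)$ is attained at the balanced SVD factorization $B_i=\sqrt{\sigma_i(\bar X)}\,U_i$, $C_i=\sqrt{\sigma_i(\bar X)}\,V_i$, which is precisely $(\bar B,\bar C)$. For the directional derivatives I would set $R:=\A^{*}(\A\bar X-b)$, so that, the data term being smooth, $\N'_{\Delta X}(\bar X)=\reg'_{\Delta X}(\bar X)+2\langle R,\Delta X\rangle$, and then proceed in three steps: (i) obtain an explicit formula for $\reg'_{\Delta X}(\bar X)$; (ii) read off the first-order stationarity relations and one well-chosen second-order inequality from local minimality of $g(B,C):=\tilde{\reg}(B,C)+\|\A BC^{\T}-b\|^{2}$ at $(\bar B,\bar C)$; and (iii) combine them, at which point almost everything cancels.

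For (i), write $r=\rank(\bar X)<k$, take a reduced SVD $\bar X=\sum_{i=1}^{r}\bar\sigma_i u_i v_i^{\T}$, let $P_{U_1},P_{V_1}$ be the orthogonal projections onto $\mathrm{span}\{u_i\}$ and $\mathrm{span}\{v_i\}$, and let $U_2,V_2$ be orthonormal bases of the complementary subspaces. I would invoke matrix perturbation theory: the $r$ positive singular values are directionally differentiable at $\bar X$ and contribute $\sum_{i\le r}f'_\mu(\bar\sigma_i)\,u_i^{\T}\Delta X\,v_i$ (the trace formula absorbs any coinciding values), whereas the $N-r$ vanishing singular values leave $0$ with first-order rates equal to the singular values of $U_2^{\T}\Delta X V_2$; since $f_\mu(s)=2\sqrt{\mu}\,s-s^{2}$ near $0$, hence $f'_\mu(0^{+})=2\sqrt{\mu}$, their total first-order contribution is $2\sqrt{\mu}\,\|U_2^{\T}\Delta X V_2\|_*$ (equivalently, use $\sum_{i>r}\sigma_i(X)=\|X\|_*-\sum_{i\le r}\sigma_i(X)$ together with the subdifferential of $\|\cdot\|_*$ at $\bar X$). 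As $U_2^{\T}\Delta X V_2=U_2^{\T}\tilde X V_2$, this gives
\[
\reg'_{\Delta X}(\bar X)=\sum_{i=1}^{r}f'_\mu(\bar\sigma_i)\,u_i^{\T}\Delta X\,v_i+2\sqrt{\mu}\,\|U_2^{\T}\tilde X V_2\|_*.
\]

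For (ii), $\nabla g(\bar B,\bar C)=0$ reads, columnwise, $f'_\mu(\bar\sigma_i)\bar B_i+2R\bar C_i=0$ and $f'_\mu(\bar\sigma_i)\bar C_i+2R^{\T}\bar B_i=0$; on the active columns ($\bar\sigma_i>0$), dividing by $\sqrt{\bar\sigma_i}$ yields $Rv_i=-\tfrac12 f'_\mu(\bar\sigma_i)u_i$ and $R^{\T}u_i=-\tfrac12 f'_\mu(\bar\sigma_i)v_i$, from which the mixed blocks vanish and $R=P_{U_1}RP_{V_1}+P_{U_2}RP_{V_2}$ with $P_{U_1}RP_{V_1}=-\tfrac12\sum_{i\le r}f'_\mu(\bar\sigma_i)u_i v_i^{\T}$. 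This is the one place the overparameterization matters: since $r<k$, at least one column pair of $(\bar B,\bar C)$ is zero, and perturbing such a pair by $(t\beta,t\gamma)$ changes $g$ by $t^{2}\bigl[\sqrt{\mu}(\|\beta\|^{2}+\|\gamma\|^{2})+2\beta^{\T}R\gamma\bigr]+o(t^{2})$, whose nonnegativity (the second-order necessary condition) for all $\beta,\gamma$ forces $\|R\|_{\mathrm{op}}\le\sqrt{\mu}$, where $\|\cdot\|_{\mathrm{op}}$ is the spectral norm.

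Step (iii) is then bookkeeping: substituting the block form of $R$ into $2\langle R,\Delta X\rangle$, the $P_{U_1}RP_{V_1}$ piece produces $-\sum_{i\le r}f'_\mu(\bar\sigma_i)u_i^{\T}\Delta X v_i$, which exactly cancels the first sum in $\reg'_{\Delta X}(\bar X)$, while the $P_{U_2}RP_{V_2}$ piece produces $2\langle U_2^{\T}RV_2,\,U_2^{\T}\tilde X V_2\rangle$, so that
\[
\N'_{\Delta X}(\bar X)=2\sqrt{\mu}\,\|U_2^{\T}\tilde X V_2\|_*+2\langle U_2^{\T}RV_2,\,U_2^{\T}\tilde X V_2\rangle\ \ge\ 0,
\]
because $\|U_2^{\T}RV_2\|_{\mathrm{op}}\le\|R\|_{\mathrm{op}}\le\sqrt{\mu}$ and trace duality gives $\langle U_2^{\T}RV_2,U_2^{\T}\tilde X V_2\rangle\ge-\sqrt{\mu}\,\|U_2^{\T}\tilde X V_2\|_*$. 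I expect the main obstacle to be step (i): rigorously differentiating $\reg$ at a rank-deficient point, in particular justifying that the vanishing singular values contribute exactly $2\sqrt{\mu}\,\|U_2^{\T}\Delta X V_2\|_*$ in the presence of repeated singular values and the nonsmoothness of $X\mapsto\sum_{i>r}\sigma_i(X)$. Everything after that is the cancellation above: first-order stationarity kills the smooth part of $\reg'$ against the matching block of $R$, and the surviving nuclear-norm term is dominated precisely by the spectral-norm bound $\|R\|_{\mathrm{op}}\le\sqrt{\mu}$ supplied by a spare column.
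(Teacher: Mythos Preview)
Your argument is correct and takes a genuinely different route from the paper's. The paper first uses the subdifferential formula $\N'_{\Delta X}(\bar X)=\max_{2Z\in\partial G(\bar X)}\langle 2Z,\Delta X\rangle+\langle\nabla H(\bar X),\Delta X\rangle$ together with the stationarity relations of Lemma~\ref{lemma:statpoints} to show that the mixed blocks $K_{12},K_{21}$ and the off-diagonal of $K_{11}$ in the decomposition of $\tilde X$ drop out of $\N'_{\Delta X}$; having reduced to the case where $\bar X$ and $\tilde X$ share singular subspaces, it then \emph{constructs} an explicit continuous curve $(B(t),C(t))$ with $B(t)C(t)^{\T}=\bar X+t\Delta X$ and $\tilde\reg(B(t),C(t))=\reg(\bar X+t\Delta X)$, so that local minimality of $\D$ at $(\bar B,\bar C)$ gives the sign directly. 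The hypothesis $\rank(\tilde X)\le k$ is used precisely here: together with $\rank(\bar X)<k$ it guarantees that the curve fits inside the $2k$ available columns.

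Your approach bypasses the path construction entirely. The first-order block structure of $R$ you extract is exactly the content of the paper's Lemma~\ref{lemma:statpoints}; the new ingredient is the second-order necessary condition on a spare zero column, which yields the spectral bound $\|R\|_{\mathrm{op}}\le\sqrt\mu$ and closes the estimate via trace duality. Two remarks are worth making. First, the obstacle you anticipate in step~(i) dissolves once you use the paper's own decomposition $\reg=G-\|\cdot\|_F^2$ with $G$ convex and Lemma~\ref{lemma:subgrad}: the directional derivative of $G$ is the support function of $\partial G(\bar X)$, which immediately gives your formula (and $f_\mu$ is $C^1$, so the ``smooth'' part is unproblematic even at $\bar\sigma_i=\sqrt\mu$). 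Second, note that your argument never uses $\rank(\tilde X)\le k$; it actually shows $\N'_{\Delta X}(\bar X)\ge 0$ for \emph{every} direction $\Delta X$, i.e.\ that $\bar X$ is a stationary point of $\N$ in the full space. This is strictly stronger than the theorem as stated and only needs one spare zero column rather than the full overparameterization, whereas the paper's constructive proof genuinely consumes the $2k$ columns.
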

Note that there can be local minimizers for which $\tilde{\reg}(\bar{B},\bar{C}) > \reg(\bar{B}\bar{C}^T)$ since $\tilde{\reg}$ is non-convex. 
From an algorithmic point of view we can, however, escape such points by taking the current iterate and recompute the factorization of $\bar{B}\bar{C}^T$ using SVD. If the SVD of $\bar{B}\bar{C}^T= \sum_{i=1}^r \sigma_i U_i V_i^T$ we update $\bar{B}$ and $\bar{C}$ to $\bar{B}_i = \sqrt{\sigma_i}U_i$ and $\bar{C}_i = \sqrt{\sigma_i}V_i$, which we know reduces the energy and gives $\tilde{\reg}(\bar{B},\bar{C}) = \reg(\bar{B}\bar{C}^T)$.


\begin{figure*}[t!]
\centering
\includegraphics[width=0.495\textwidth]{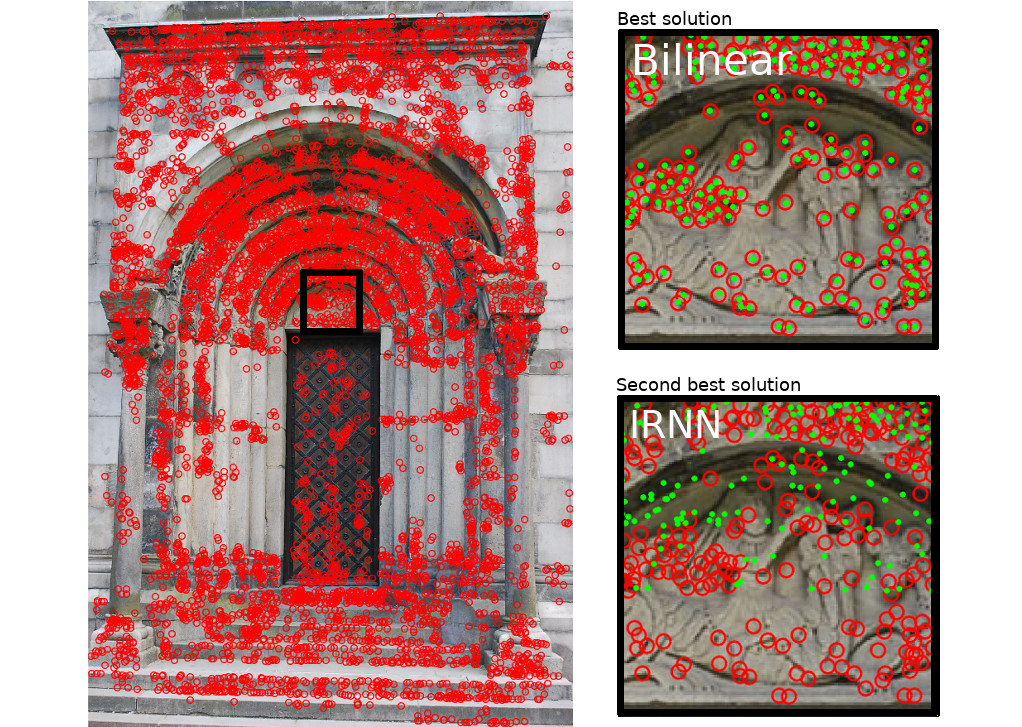}%
\includegraphics[width=0.495\textwidth]{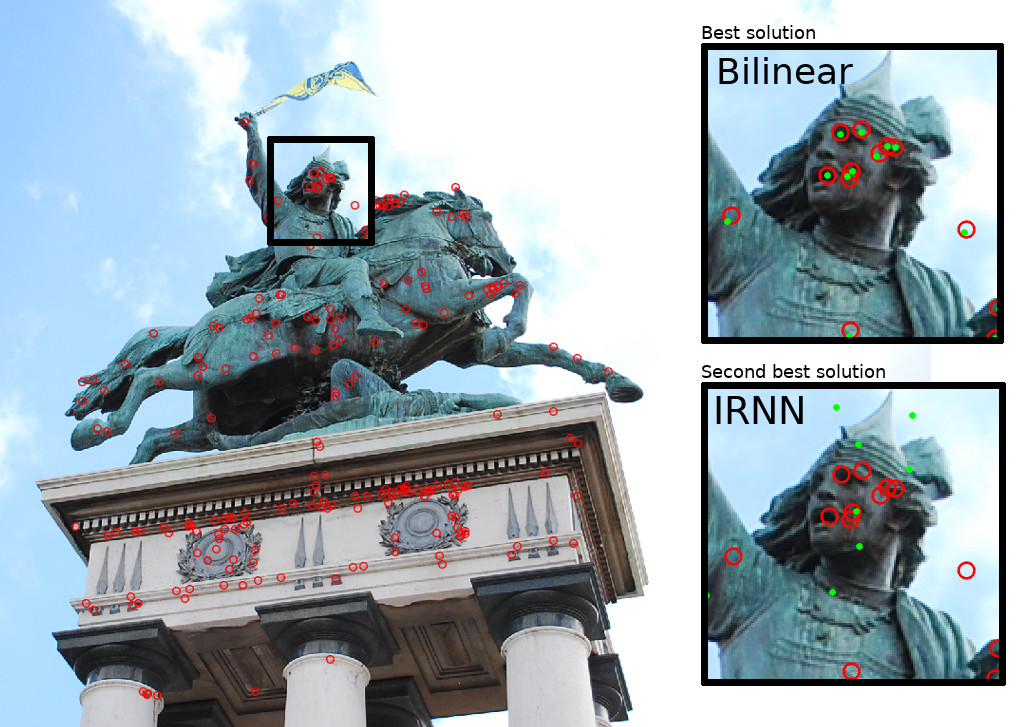}\\
\includegraphics[width=0.95\textwidth]{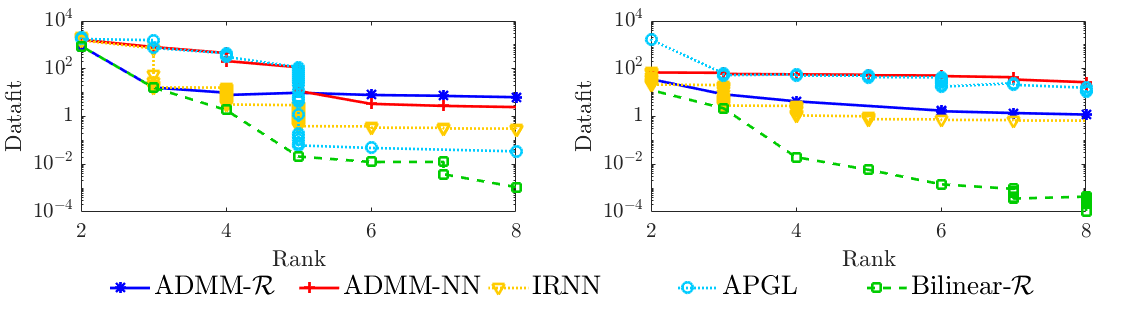}%
\caption{Comparison of reprojection error obtained using the bilinear formulation
and ADMM, for datasets \emph{Door} and \emph{Vercingetorix}~\cite{olsson-engqvist-scia-2011}. The red circles mark the
feature points and the green dots the projected image points obtained from the different methods. The best rank~4 solution for the respective method was used. The control parameter $\eta=0.5$ in both experiments.}
\label{fig:pose_reproj}
\end{figure*}

Theorem~\ref{thm:dirderiv} allows us to derive optimality conditions using the properties of \eqref{eq:generalregformulation}.
As a simple example, consider the case where $\|\A X\|^2 \geq \|X\|^2$, which makes \eqref{eq:generalregformulation} convex \cite{carlsson2016convexification}, and let $B$ and $C$ have $2k$ columns. Suppose that we find a local minimizer $(\bar{B},\bar{C})$ fulfilling the assumptions of Theorem~\ref{thm:dirderiv}. Then the derivative along a line segment towards any other low rank matrix is non-decreasing, and therefore $\bar{B}\bar{C}^T$ is the global optimum of \eqref{eq:generalregformulation} over the set of matrices with $\rank \leq k$ by convexity.

Below we give a result that goes beyond convexity and applies to the important class \cite{recht-etal-siam-2010} of problems that obey 
the RIP constraint \eqref{eq:RIP}. Let $\A^*$ denote the adjoint operator of~$\A$, then:
\begin{theorem}\label{thm:lowrank-opt}
	Assume that $(\bar{B},\bar{C})$ is a local minimizer of \eqref{eq:generalbilin},
fulfilling the assumptions of Theorem~\ref{thm:dirderiv}. If the singular values of $Z = (I-\A^* \A)\bar{B}\bar{C}^T+\A^*b$ fulfill $\sigma_i(Z) \notin [(1-\delta_{2k})\sqrt{\mu}, \frac{\sqrt{\mu}}{(1-\delta_{2k})}]$
	then $\bar{B} \bar{C}^T$ is the solution of \eqref{eq:regminprobl} and \eqref{eq:rankminprobl}.
\end{theorem}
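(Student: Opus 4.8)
My plan is to use Theorem~\ref{thm:dirderiv} to convert the local minimality into a first–order condition for the ``prox'' functional $\reg(X)+\norm{X-Z}_F^2$, read off the spectral structure of $\bar B\bar C^{\T}$ that this condition forces, and then build a quadratic surrogate out of the RIP whose global minimizer over rank–$k$ matrices can be computed by singular–value thresholding; the gap hypothesis on $\sigma_i(Z)$ is exactly what makes the thresholded matrix coincide with $\bar B\bar C^{\T}$.

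Write $\bar X=\bar B\bar C^{\T}$. The defining relation $Z=(I-\A^*\A)\bar X+\A^*b$ gives $\A^*(\A\bar X-b)=\bar X-Z$, so the directional–derivative statement of Theorem~\ref{thm:dirderiv} is equivalent to: all directional derivatives of $h(X):=\reg(X)+\norm{X-Z}_F^2$ at $\bar X$ toward matrices $\tilde X$ with $\rank(\tilde X)\le k$ are non-negative. Since $\rank(\bar X)<k$, the admissible directions include perturbations that raise the rank, and since $\reg$ is a unitarily invariant spectral function of the $C^1$ scalar $f_\mu$, I would extract the structure of $\bar X$ from this: a ``rotational'' perturbation (one that leaves the singular values, hence $\reg$, fixed) forces $\bar X$ and $Z$ to be simultaneously reducible by the same pair of orthogonal matrices, and the scalar inequalities along the singular directions then force $\sigma_i(\bar X)=\sigma_i(Z)$ on every direction with $\sigma_i(Z)>\sqrt{\mu}$ and $\sigma_i(\bar X)=0$ on every direction with $\sigma_i(Z)<\sqrt{\mu}$ (the hypothesis excludes $\sigma_i(Z)=\sqrt{\mu}$). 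In particular every nonzero singular value of $\bar X$ exceeds $\sqrt{\mu}$, so $\reg(\bar X)=\mu\,\rank(\bar X)$, consistent with $\reg(\bar X)=\tilde\reg(\bar B,\bar C)$.

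For the surrogate, take any $\tilde X$ with $\rank(\tilde X)\le k$ and set $\Delta=\tilde X-\bar X$, so $\rank(\Delta)<2k$ and RIP gives $\norm{\A\Delta}^2\ge(1-\delta_{2k})\norm{\Delta}_F^2$. Expanding $\norm{\A\tilde X-b}^2$ about $\bar X$ yields $\N(\tilde X)\ge G(\tilde X)$, where, after completing the square, $G(X)=\reg(X)+(1-\delta_{2k})\norm{X-W}_F^2+c_0$ with $W=(Z-\delta_{2k}\bar X)/(1-\delta_{2k})$, and $G(\bar X)=\N(\bar X)$. A routine spectral computation shows that the minimizer of $\min_{\rank(X)\le k}\reg(X)+c\norm{X-W}_F^2$ is the (rank–$k$ truncated) hard–thresholding of the singular values of $W$ at level $\sqrt{\mu/c}$; for $c=1-\delta_{2k}$ this level is $\sqrt{\mu}/\sqrt{1-\delta_{2k}}$. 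By the structure of the second paragraph, $W$ is reduced by the same orthogonal pair as $Z$ and $\bar X$, with $\sigma_i(W)=\sigma_i(Z)$ where $\sigma_i(Z)>\sqrt{\mu}$ and $\sigma_i(W)=\sigma_i(Z)/(1-\delta_{2k})$ where $\sigma_i(Z)<\sqrt{\mu}$. The hypothesis $\sigma_i(Z)\notin[(1-\delta_{2k})\sqrt{\mu},\,\sqrt{\mu}/(1-\delta_{2k})]$ then puts each $\sigma_i(W)$ strictly on the correct side of $\sqrt{\mu}/\sqrt{1-\delta_{2k}}$ — strictly above it on the ``kept'' directions, strictly below $\sqrt{\mu}$ on the others — so thresholding $W$ returns precisely $\bar X$, and since only $\rank(\bar X)<k$ directions survive the rank bound is inactive. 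Hence $\N(\tilde X)\ge G(\tilde X)\ge G(\bar X)=\N(\bar X)$ for all $\rank(\tilde X)\le k$, which is optimality of $\bar X$ for \eqref{eq:regminprobl} (over matrices of rank at most $k$). Optimality for \eqref{eq:rankminprobl} then follows from $f_\mu(\sigma)\le\mu$ for $\sigma>0$, i.e.\ $\reg(X)\le\mu\,\rank(X)$, together with $\reg(\bar X)=\mu\,\rank(\bar X)$, so $\mu\,\rank(\tilde X)+\norm{\A\tilde X-b}^2\ge\N(\tilde X)\ge\N(\bar X)=\mu\,\rank(\bar X)+\norm{\A\bar X-b}^2$.

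I expect the technical core to be the extraction of the spectral structure of $\bar X$ in the second paragraph: $\reg$ fails to be differentiable at $\bar X$ (its zero singular values coincide, and possibly some positive ones), so this must be carried out with directional derivatives of a spectral function, handling the rotational degeneracy inside the null space of $\bar X$ and at any repeated singular values, and it is exactly there that the weak consequence $\sigma_i(Z)\ne\sqrt{\mu}$ of the gap hypothesis is used. Once that structure is available, the RIP lower bound and the thresholding bookkeeping are routine.
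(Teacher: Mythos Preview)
Your proposal is correct and reaches the same conclusion, but the route diverges from the paper's after the first step. Both arguments begin by translating Theorem~\ref{thm:dirderiv} into non-negativity of the directional derivatives of $\C(X)=\reg(X)+\|X-Z\|_F^2$ at $\bar X$ toward rank-$k$ matrices, and both then need that $\bar X$ is the hard-thresholding of $Z$ at $\sqrt\mu$. The paper obtains this cleanly by invoking the convexity of $\C$ (so non-negative directional derivatives give $\bar X\in\argmin_{\rank\le k}\C$) and then Lemma~\ref{lemma:lowrankopt}; your ``rotational plus scalar'' extraction in the second paragraph is essentially a hands-on reproof of the same fact, and noting the convexity of $\C$ would streamline it considerably. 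The substantive divergence is in how global optimality of $\N$ is established. The paper argues along line segments: it shows $\N'_{\Delta X}(\bar X+t\Delta X)>0$ for $t\neq 0$ by combining the RIP lower bound on $\langle\nabla H(\bar X+t\Delta X)-\nabla H(\bar X),\Delta X\rangle$ with a subdifferential growth estimate for $G(X)=\reg(X)+\|X\|_F^2$ imported from~\cite{olsson-etal-iccv-2017}. You instead build a single quadratic minorant $\reg(X)+(1-\delta_{2k})\|X-W\|_F^2+c_0$ touching $\N$ at $\bar X$, and verify by a thresholding computation (at level $\sqrt{\mu/(1-\delta_{2k})}$, with $W=(Z-\delta_{2k}\bar X)/(1-\delta_{2k})$) that its rank-$k$ minimizer is $\bar X$; the gap hypothesis is exactly what places each $\sigma_i(W)$ on the correct side of the threshold. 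Your route is more self-contained---it does not call on the external lemmas of~\cite{olsson-etal-iccv-2017}---and arguably more elementary once the spectral structure of $\bar X$ is in hand; the paper's route is more modular and makes the role of the convex surrogate $\C$ explicit. The final passage to \eqref{eq:rankminprobl} via $\reg(X)\le\mu\,\rank(X)$ with equality at $\bar X$ is the same in both.
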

The proof builds on the results of \cite{olsson-etal-iccv-2017} and is given in Appendix~\ref{sec:proofs}.
The assumption that the singular values of $Z$ are not too close to the threshold $\sqrt{\mu}$ 
is a natural restriction which is valid when the noise level is not too large.
In case of exact data, \ie{}~$b = \A X_0$, where $\rank(X_0) = r$ it is trivially fulfilled for any choice of $\mu$ such that $\sqrt{\mu}< (1-\delta_{2k})\sigma_r(X_0)$
since we have $Z=X_0$. For additional details on $Z$'s dependence on noise see \cite{carlsson2018unbiased}.

The above result is similar in spirit to those of \cite{recht-etal-siam-2010,haeffele-vidal-arxiv-2017}, which show that, in the convex case,
having $2k$ columns and rank $2k-1$ is enough to ensure that a local minimizer is global.
For the proof in our non-convex case we need rank at most $k-1$. Presently, it is not clear if our assumption can be relaxed to match that of the convex case or not.

\section{An Iterative Reweighted VarPro Algorithm}\label{sec:implement}

In this section we give a brief overview of our algorithm for minimizing \eqref{eq:generalbilin}. A more detailed description is given in Appendix~\ref{sec:implementationdetails}.

Given a current iterate, $B^{(t)}$ and $C^{(t)}$, the first step of our algorithm is to replace the term $\tilde{\reg}(B,C)$ with a quadratic function. To do this we note that by the Taylor expansion
$
f(x) \approx f(x_0)+f'(x_0)(x-x_0),
$
minimizing $f(x)$ and $f'(x_0)x$ around $x_0$ is roughly the same (ignoring constants). Inserting $x_0 = \frac{\|B^{(t)}_i\|^2+\|C^{(t)}_i\|^2}{2}$ and
$x = \frac{\|B_i\|^2+\|C_i\|^2}{2}$ now gives our approximation
\begin{equation}
\sum_{i=1}^k w^{(t)}_i(\|B_i\|^2+\|C_i\|^2)+\|\A B C^T - b\|^2,
\label{eq:weighedapprox}
\end{equation}
where $w^{(t)}_i=\fr{1}{2}f'\!\left((\|B^{(t)}_i\|^2+\|C^{(t)}_i\|^2)/2\right)$. Here $B_i^{(t)}$ and $C_i^{(t)}$ are the $i$:th columns of $B^{(t)}$ and $C^{(t)}$, respectively. 
Minimizing \eqref{eq:weighedapprox} over $C$ is now a least squares problem with closed form solution. Inserting this solution into the original problem gives a nonlinear problem in $B$ alone,
which is what VarPro solves.
We use the so called Ruhe and Wedin (RW2) approximation with a dampening term $\lambda \|B-B^{(t)}\|_F^2$,    
see~\cite{hong-etal-cvpr-2017} for details. In each step of the VarPro algorithm we update the weights $w_i^{(t)}$.

As previously mentioned, there can be stationary points for which $\tilde{\reg}(B,C) > \reg(B C^T)$.
In each iteration we therefore take the current iterate and recompute the factorization of $B^{(t)}C^{(t)T}$ using SVD. If the SVD of $B^{(t)}C^{(t)T}= \sum_{i=1}^r \sigma_i U_i V_i^T$ we update $B^{(t)}$ and $C^{(t)}$ to $B^{(t)}_i = \sqrt{\sigma_i}U_i$ and $C^{(t)}_i = \sqrt{\sigma_i}V_i$ which we know reduces the energy and gives $\tilde{\reg}(B^{(t)},C^{(t)}) = \reg(B^{(t)}C^{(t)T})$.

Our approach can be seen as iteratively reweighted nuclear norm minimization~\cite{canyi2015}; however, our bilinear formulation
allows us to use quadratic approximation, thus benefiting from second order convergence in the neighborhood of a local minimum.

\section{Experiments}
In this section we will show the versatility and strength of the proposed method, focusing on computer vision
problems.
In Section~\ref{sec:pOSE} we show an example where state-of-the-art methods fail to
achieve a value close to global optimality. We include two more examples of real problems,
in Appendix~\ref{sec:moreexp}: background extraction and photometric stereo. In both cases
our method shows superior performance.
In the main paper we focus on the trade-off between datafit and rank, but show, in the
examples in the supplementray material, the added benefits of convergence speed using the
proposed method. This is done by minimizing the same energy with ADMM and the proposed method,
in which case the splitting schemes can be tediously slow.
In all experiments our proposed method is initialized randomly,
with zero mean and unit variance.

\subsection{Synthetic Missing Data Problem}\label{sec:synth}
Let~$\hada$ denote the Hadamard product, and consider the
missing data formulation
\begin{equation}
   \min_{\mat{X}}\mu\rank(\mat{X}) + \norm{\mat{W}\hada(\mat{X}-\mat{M})}_F^2,
   \label{eq:missingdata2}
\end{equation}
where~$\mat{M}$ is a measurement matrix and~$\mat{W}$ a missing data mask with
entries $w_{ij}=1$ if the entry is known, and zero otherwise.

In low-level vision applications such as denoising and image inpainting a uniformly random
missing data pattern is often a reasonable approximation of the distribution; however, for
structure from motion, the missing data pattern is often highly structured. To this end,
we investigate two kinds of patterns: uniformly random and ``tracking failure".
In order to construct realistic patterns of tracking failure, we use the method
in~\cite{larsson-olsson-cvpr-2017}. This is done
by randomly selecting if a track should have missing data (with uniform probability), then
select (with uniform probability, starting after the first few frames) in which
image tracking failure occurs. If a track is lost, it is not restarted.

\begin{figure}[h!]
\centering
\includegraphics[width=0.475\textwidth]{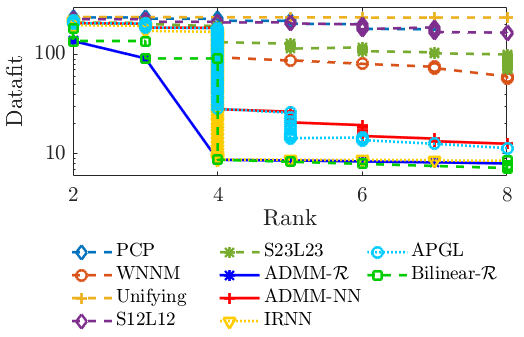}
\caption{Rank vs datafit for the synthetic experiment in Section~\ref{sec:synth}.
No true low rank solution using LpSq~\cite{nie-2012} could be found, regardless of
the choice of parameters.}
\label{fig:synth_rank_vs_datafit}
\end{figure}
We generate random ground truth matrices~\mbox{$\mat{M}_{0}\in\R^{32\times 512}$} of rank~4, which
can be expressed as $\mat{M}_{0}=\mat{U}\mat{V}^{\T}$, where
$\mat{U}\in\R^{32\times 4}$ and~$\mat{V}\in\R^{512\times 4}$. The entries of $\mat{U}$
and $\mat{V}$ are normal distributed with zero mean and unit variance. The measurement
matrix~$\mat{M}=\mat{M}_0+\mat{N}$, where $\mat{N}$ simulates noise and has
normal distributed entries
with zero mean and variance~$\sigma^2$.

\begin{figure*}[thb]
\centering
\setlength\tabcolsep{0.12cm}
\def\arraystretch{0.95}
\def\w{42.16mm}
\def\ww{20mm}

\newcommand{\vcenteredinclude}[1]{\begingroup
\setbox0=\hbox{\includegraphics[width=\ww]{#1}}%
\parbox{\wd0}{\box0}\endgroup}
\begin{tabular}{cccc}
\emph{Drink}\vcenteredinclude{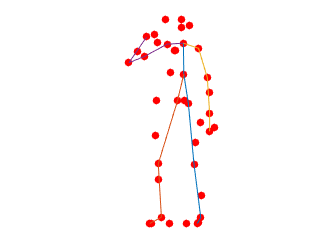} &
\emph{Pickup}\vcenteredinclude{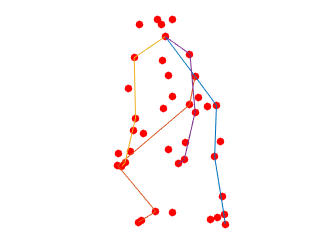} &
\emph{Stretch}\vcenteredinclude{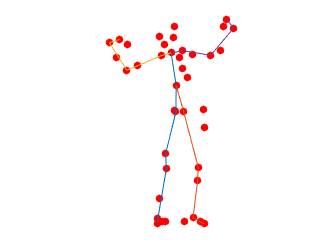} &
\emph{Yoga}\vcenteredinclude{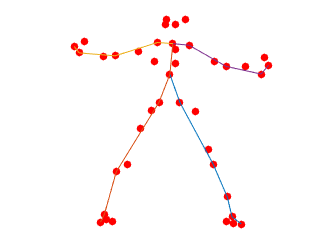}
\vspace{-0.15cm} \\
\includegraphics[width=\w]{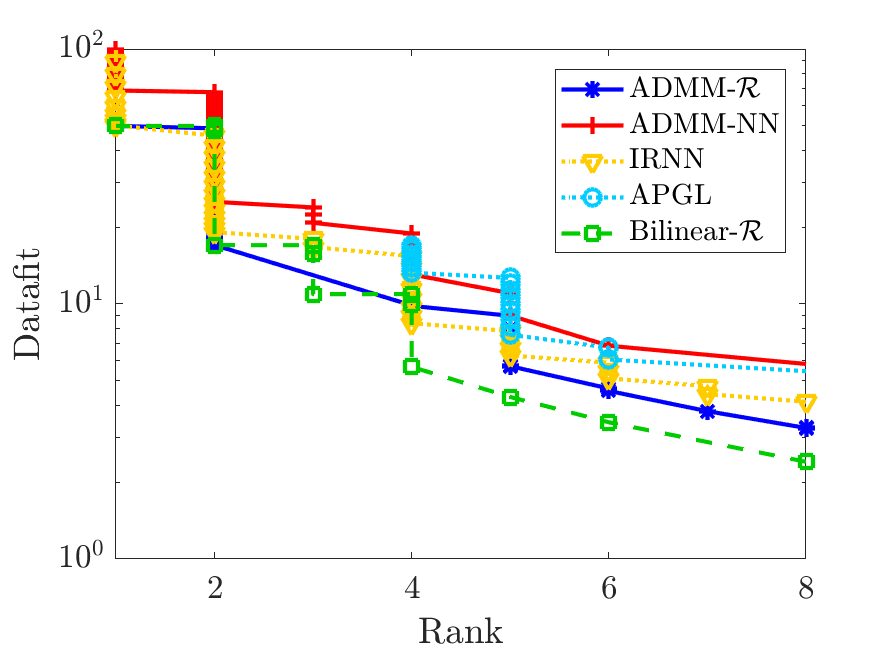} &
\includegraphics[width=\w]{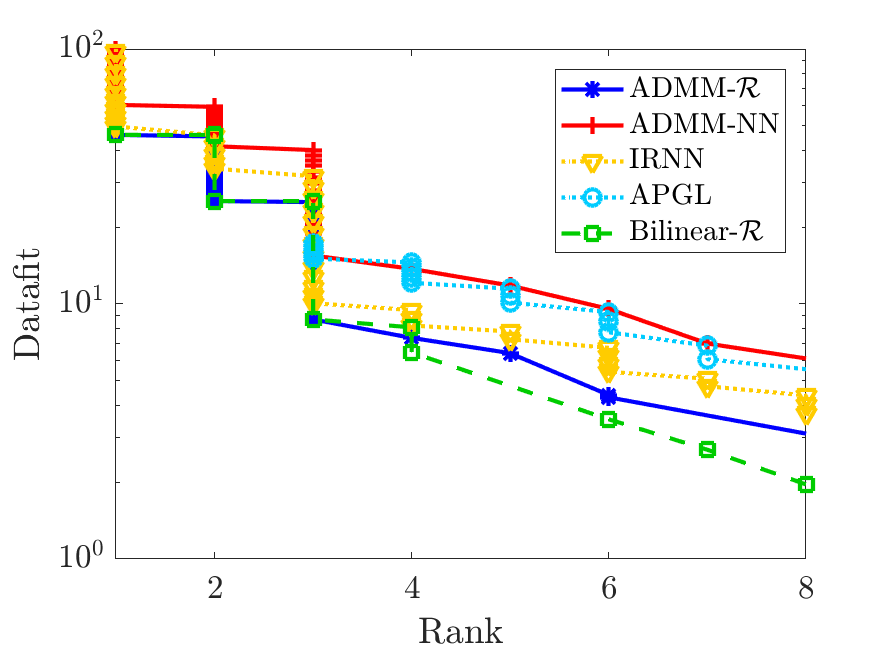} &
\includegraphics[width=\w]{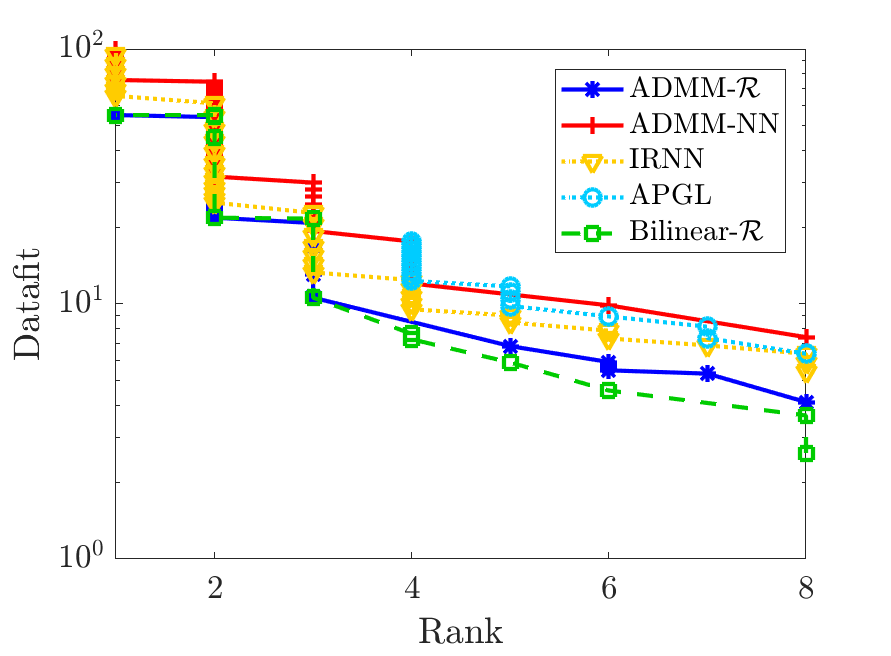} &
\includegraphics[width=\w]{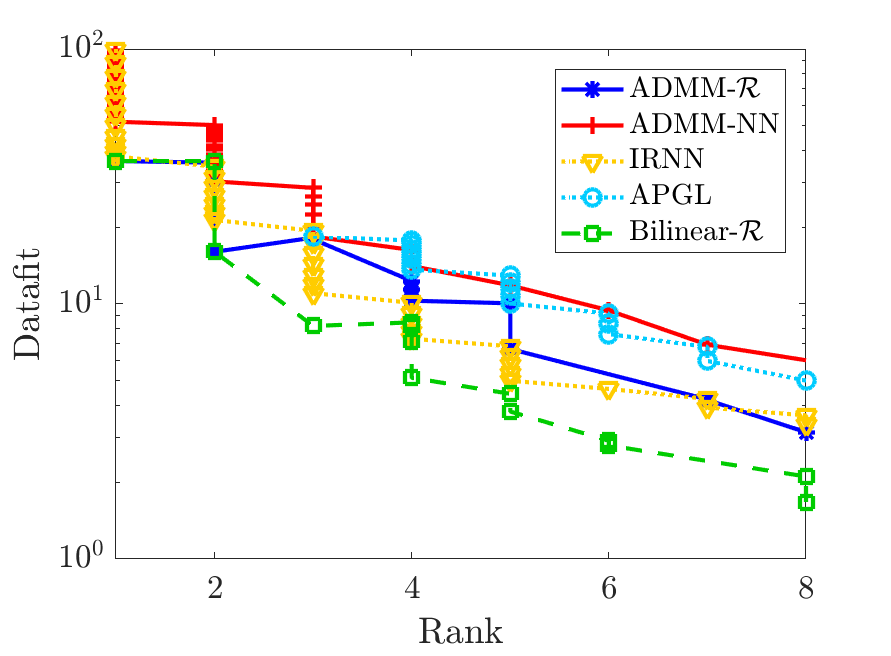} \\
\end{tabular}
\caption{\emph{Top row:} Example frames from the MOCAP dataset of the \emph{drink}, \emph{pickup},
\emph{stretch} and \emph{yoga} sequences. \emph{Last row:} The bilinear method finds the same or a better datafit compared to the other methods for all ranks.}
\label{fig:MOCAP}
\vspace{-0.4cm}
\end{figure*}

Our proposed method is compared to a variety of different methods~\cite{cabral-etal-iccv-2013,candes-etal-acm-2011,gu-2016,nie-2012,shang-etal-2018,canyi2015,toh-yun-2010,boyd-etal-2011,larsson-olsson-ijcv-2016}.
For the methods that need an initial estimate of the rank as input, the rank estimation
heuristic by Shang~\etal{}~\cite{shang-etal-2018} is used.
The regularization parameter is set to
$\lambda=\sqrt{\max(m,n)}$, given a sought $m\times n$ matrix, as proposed by~\cite{candes-etal-acm-2011,shang-etal-2018}. In
case other parameters should be provided, the one recommended from the respective authors have been
used.
The number of columns, for our proposed method, is set to
$k=8$, \ie{}~twice the rank of the original matrix~$M_0$.
We exclusively use the~$f_\mu$ regularization~\eqref{eq:fmu},
and use $\sqrt{\mu}=\lambda$. Since $f_\mu$ is a special case of MCP, it is used for IRNN
as well.
Furthermore, we include the results for regularizing with nuclear norm~\cite{boyd-etal-2011} and
$f_\mu$~\eqref{eq:fmu} using ADMM, as
proposed in~\cite{larsson-olsson-ijcv-2016}. Note that ADMM comes without optimality guarantees,
however, it has been shown to work well for several computer vision problems in
practice~\cite{larsson-olsson-ijcv-2016,olsson-etal-iccv-2017}.
Several of the compared methods solve the robust PCA problem, thus
also include a sparse component, which is not taken into account.

The results are shown in Table~\ref{tab:synth}. Note that most algorithms perform significantly
better for the uniformly random missing data pattern, than compared to the structured missing
data pattern. Our proposed method outperforms all other methods in this comparison.

Since the final rank of the estimated matrix is not necessarily the same as that of~$M_0$, we show the rank vs datafit obtained when varying the regularization parameter~$\lambda$ in Figure~\ref{fig:synth_rank_vs_datafit}.
It is evident from the results that the only candidates that yield an acceptable result for low rank solutions are ADMM with $f_\mu$, IRNN with MCP and our proposed method.

\subsection{pOSE: Pseudo Object Space Error}
\label{sec:pOSE}
The Pseudo Object Space Error (pOSE) objective
combines affine and projective camera models

\begin{align}
    \ell_{\textsf{OSE}} &= \sum_{(i,j)\in\Omega} \norm{(\mat{P}_{i,1:2}\tilde{\vec{x}}_j-(\vec{p}^{\T}_{i,3}\tilde{\vec{x}}_j)\vec{m}_{i,j})  }^2, \\
    \ell_{\textsf{Affine}} &= \sum_{(i,j)\in\Omega} \norm{\mat{P}_{i,1:2}\tilde{\vec{x}}_j-\vec{m}_{i,j}}^2, \\
    \ell_{\textsf{pOSE}} &= (1-\eta)\ell_{\textsf{OSE}}+\eta\ell_{\textsf{Affine}},
\end{align}
where~$\ell_{\textsf{OSE}}$ is the object space error and~$\ell_{\textsf{Affine}}$ is the affine projection error.
Here $\mat{P}_{i,1:2}$ denotes the first two rows, $\vec{p}_{i,3}$ the third row of the $i$:th camera matrix, and
$\tilde{\vec{x}}_j$ is the $j$:th 3D point in homogeneous coordinates.
The control parameter~$\eta\in[0,1]$ determines the impact of the respective camera model.
This objective was introduced in~\cite{hong-zach-cvpr-2018} to be used in
a first stage of an initialization-free bundle adjustment pipeline,
optimized using VarPro.

The~$\ell_{\textsf{pOSE}}$ objective is linear, and acts on low-rank components
$\mat{P}$ and~$\mat{X}$, which are constrained by \mbox{$\rank(PX^{\T})=4$}.
Instead of enforcing the rank constraint, we replace it as before with a relaxation.
By not enforcing the rank constraint we demonstrate the ability of the methods to make
accurate trade-offs between minimizing the rank and fitting the data.
Since the objective now becomes more complex, and is no longer compatible with the
missing data formulations, only IRNN and APGL are directly applicable, as well as the
ADMM approach using $f_\mu$ and nuclear norm.
We use two real-life datasets with various amounts of camera locations and 3D points:
\emph{Door} with 12 images, resulting in seeking a matrix of size
$36\times 8850$ and \emph{Vercingetorix}~\cite{olsson-engqvist-scia-2011} with 69 images, resulting
in seeking a matrix of size~$207\times 1148$, both of which have rank~4.
\footnote{
The datasets are available here:
\url{http://www.maths.lth.se/matematiklth/personal/calle/dataset/dataset.html}.}

As in the synthetic experiment from Section~\ref{sec:synth}, the regularization parameter is
varied and the resulting rank and datafit is stored and reported in Figure~\ref{fig:pose_reproj}.
To visualize the results, we considered the best rank~4 approximations, and show the
reprojected points and the corresponding measured
points obtained from the best method (ours in both cases) and the second best (IRNN in both cases),
see Figure~\ref{fig:pose_reproj}.
As is readily seen by ocular inspection, the rank~4 solution obtained by our proposed method
significantly outperforms those of other state-of-the-art methods.

\subsection{Non-Rigid Structure From Motion}
In this section we test our approach on non-rigid reconstruction (NRSfM) with the CMU Motion Capture (MOCAP) dataset.
In NRSfM, the complexity of the deformations are controlled by some mild assumptions
of the object shapes. Bregler \etal{}~\cite{bregler-etal-cvpr-2000} suggested that the set of
all possible configurations of the objects are spanned by a low dimensional linear
basis of dimension~$K$. In this setting, the non-rigid shapes $X_i\in\R^{3\times n}$
can be represented as $X_i=\sum_{k=1}^Kc_{ik}B_k$, where $B_k\in\R^{3\times n}$ are
the basis shapes and $c_{ik}\in\R$
the shape coefficients. This way, the matrix~$X_i$ contains the world coordinates of point $i$, hence
the observed image points are given by $x_i=R_iX_i$. We will assume orthographic
cameras, \ie{} $R_i\in\R^{2\times 3}$ where~$R_iR_i^{\T}=I_2$.
As proposed by Dai \etal{}~\cite{dai-etal-ijcv-2014}, the problem can be turned into a
low-rank factorization problem by reshaping and stacking the non-rigid shapes~$X_i$.
Let $X_i^\sharp\in\R^{1\times 3n}$ denote the concatenation of the rows in~$X_i$,
and create $X^\sharp\in\R^{F\times 3n}$ by stacking~$X_i^\sharp$. This allows
us to decompose the matrix~$X^\sharp$ in the low-rank factors $X^\sharp=CB^\sharp$, where
$C\in\R^{F\times K}$ contains the shape coefficients $c_{ik}$ and $B^\sharp\in\R^{K\times 3n}$
is constructed as $X^\sharp$ and contains the basis elements.

A suitable objective function is thus given by
\begin{equation}\label{eq:mocap_objective}
    \mu\rank(X^\sharp)+\norm{RX-M}^2_F,
\end{equation}
where~$R\in\R^{2F\times 3F}$ is a block-diagonal matrix with the camera matrices $R_i$ on
the main diagonal, $X\in\R^{3F\times n}$ is the concatenation of the 3D points $X_i$, and
$M\in\R^{2F\times n}$ is the concatenated observed image points $x_i$.
By replacing
the rank penalty with a relaxation and minimize it using the proposed method and the
methods used in the previous section. The regularization parameter is varied for the
respective methods in order to obtain a rank 1--8 solution, and the respective datafit
is reported in Figure~\ref{fig:MOCAP}, for four different sequences.

In all sequences, the best datafit for each rank level is obtained by our proposed method. IRNN and
ADMM using~$f_\mu$ is able to give the same, or very similar, datafit for lower ranks, but for
solutions with rank larger than four our method consistently reports a lower value than the
competing state-of-the-art methods.

\section{Conclusions}
In this paper we presented a unification of bilinear parameterization and rank regularization.
Robust penalties for rank regularization has often been used together with splitting schemes, but it
has been shown that such methods yield unsatisfactory results for ill-posed problems in several
computer vision applications. By using the bilinear formulation, the objective functions
become differentiable, and convergence rates in the neighborhood of a local minimum are faster.
Furthermore, we showed that theoretical optimality results known from the regularization formulations
can be lifted to the bilinear formulation.

Lastly, the generality of the proposed framework allows for a wide range of problems,
some of which, have not been amenable by state-of-the-art methods, but
have been proven successful using our proposed method.

{\small
\bibliographystyle{ieee}

}

\newpage

\renewcommand{\thesection}{\Alph{section}}
\setcounter{section}{0}
\section{Proofs}\label{sec:proofs}

In this section we present the proofs of Theorems~\ref{thm:dirderiv} and~\ref{thm:lowrank-opt}.
Our analysis will make use of the differentiable objective 
\begin{equation}
\D(B,C) := \tilde{\reg}(B,C) + \| \A BC^T - b \|^2,
\end{equation} 
the non-convex function
\begin{equation}
\N(X) := \reg(X)+ \| \A X - b \|^2,
\end{equation}
and the convex function
\begin{equation}
\C(X) = \reg(X)+\|X-Z\|_F^2.
\end{equation}
We will also use the functions 
\begin{eqnarray}
& \tilde{G}(B,C) = \tilde{\reg}(B,C) + \|BC^T\|_F^2, \\
& G(X) = \reg(X) + \|X\|_F^2, \label{eq:Gdef} \\
& H(X) = \|\A X - b\|^2-\|X\|_F^2. \label{eq:Hdef}
\end{eqnarray}
Note that $\D(B,C) = \tilde{G}(B,C)+H(BC^T)$ and $\N(X) = G(X)+H(X)$.
Throughout the section we use $f = f_\mu$ with $f_\mu$ as in \eqref{eq:fmu} (of the main paper) 
but for simplicity of notation we will suppress the subscript $\mu$.
Furthermore, the subdifferential $\partial G(X)$ of $G$ will be of
importance. 
Let $g(x) = f(|x|) + x^2$. The scalar function $g$ has
\begin{equation}
\partial g(x) = \begin{cases}
2x & |x| \geq \sqrt{\mu} \\
2\sqrt{\mu}\text{sign}(x) & 0 < |x| \leq \sqrt{\mu}\\
2\sqrt{\mu}[-1,1] & x= 0
\end{cases}.
\end{equation}
The following lemma shows how to compute $\partial G$ for the matrix case using $\partial g$.
\begin{lemma}
	The subdifferential of $G(X)$ is given by
	\begin{equation}
	\begin{aligned}
	\partial G(X) &= \{U\partial g(\Sigma) V^T + M\;:\;\sigma_1(M) 
	\leq 2\sqrt{\mu},\\
	&\qquad\qquad\qquad U^TM = 0 \text{ and } MV^T = 0\}
	\end{aligned}
	\end{equation}
	where $X=U\Sigma V^T$ is the SVD and $\partial g(\Sigma)$ is the matrix of same size as $\Sigma$ with diagonal elements $\partial g(\sigma_i)$.
	\label{lemma:subgrad}
\end{lemma}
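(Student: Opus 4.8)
The plan is to recognize $G$ as an orthogonally invariant (spectral) function of $X$ and apply the classical characterization of subdifferentials of such functions. First I would rewrite
$G(X) = \reg(X)+\|X\|_F^2 = \sum_{i=1}^N g(\sigma_i(X))$,
where $g(x) = f_\mu(|x|)+x^2$ is the scalar function from the statement. A short computation with \eqref{eq:fmu} gives $g(x)=2\sqrt{\mu}\,|x|$ for $|x|\le\sqrt{\mu}$ and $g(x)=\mu+x^2$ for $|x|\ge\sqrt{\mu}$; in particular $g$ is even, continuous, differentiable on $\R\setminus\{0\}$ with nondecreasing derivative, hence convex, and its subdifferential is exactly the $\partial g$ displayed above. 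Consequently $h(x_1,\dots,x_N):=\sum_i g(x_i)$ is a finite, convex, absolutely symmetric function (invariant under permutations and sign changes of its coordinates), and $G=h\circ\sigma$ is a convex spectral function.

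Second, I would invoke Lewis's characterization of the subdifferential of unitarily invariant convex matrix functions \cite{lewis1995convex} (which is stated precisely for singular values and rectangular matrices): for a convex spectral function $h\circ\sigma$,
\[
\partial G(X) = \bigl\{\, U\diag(y)V^T \;:\; y\in\partial h(\sigma(X)),\; X=U\diag(\sigma(X))V^T\,\bigr\},
\]
where $(U,V)$ ranges over \emph{all} (thin) SVD factorizations of $X$. Since $h$ is separable, $y\in\partial h(\sigma(X))$ just means $y_i\in\partial g(\sigma_i(X))$ for every $i$.

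Third, I would turn the ``ranges over all SVD'' freedom into the matrix $M$. Split the singular values of $X$ into the $r=\rank(X)$ strictly positive ones and the zero ones. On any block of equal singular values $\partial g$ is constant (it depends on $\sigma_i$ alone), so within such a block $U\diag(y)V^T$ does not change under the residual rotational freedom; thus the positive part contributes the fixed term $U\,\partial g(\Sigma)\,V^T$, where on the positive entries $\partial g(\sigma_i)\in\{2\sqrt{\mu},\,2\sigma_i\}$. On the zero block, $\partial g(0)=2\sqrt{\mu}[-1,1]$, the left singular ``vectors'' $U_2$ range over all orthonormal bases of $\mathrm{col}(X)^\perp$ and the right ones $V_2$ over orthonormal families in $\mathrm{row}(X)^\perp$, and the resulting contribution $M:=U_2\diag(y_2)V_2^T$ satisfies $\sigma_1(M)\le 2\sqrt{\mu}$, $U^TM=0$ and $MV^T=0$ (with $U,V$ now the factors of the positive part). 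The one step that needs care is that $M$ sweeps out \emph{exactly} this set: given any $M$ with those properties, take a thin SVD of $M$, pad it with zero singular values up to size $\dim\mathrm{col}(X)^\perp$, and extend its right singular vectors within $\mathrm{row}(X)^\perp$; this yields admissible $U_2,V_2,y_2$ reproducing $M$, using $\dim\mathrm{col}(X)^\perp\le\dim\mathrm{row}(X)^\perp$ or vice versa. Combining the two contributions gives the claimed formula.

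The main obstacle I anticipate is the bookkeeping in this last step in the rectangular / rank-deficient case — checking that the complementary subspaces have compatible dimensions and that the padded SVD of $M$ really extends to a genuine SVD of $X$ — rather than anything conceptually deep. An alternative that avoids citing Lewis is to verify the two inclusions directly: the ``$\supseteq$'' direction from convexity of $g$ applied singular-value-wise together with von Neumann's trace inequality $\langle Y,X'\rangle\le\sum_i\sigma_i(Y)\sigma_i(X')$ (noting $\partial g$ is nondecreasing, so the assigned values are correctly ordered), and ``$\subseteq$'' by testing against perturbations that vary one singular value or rotate within the null space; this route is longer but elementary.
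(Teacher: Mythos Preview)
Your proposal is correct and follows essentially the same route as the paper: both recognize $G$ as $h\circ\sigma$ for an absolutely symmetric convex $h$, invoke Lewis's characterization of the subdifferential of unitarily invariant functions \cite{lewis1995convex}, and then unpack the residual SVD freedom at the zero singular values to produce the matrix $M$. Your treatment is in fact a bit more careful than the paper's in two places --- you explicitly note that on blocks of equal positive singular values $\partial g$ is constant so the rotational freedom there is harmless, and you flag the dimension bookkeeping in the rectangular case --- but these are refinements of the same argument rather than a different one; the alternative von Neumann/direct-inclusion route you sketch at the end is not used in the paper.
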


Next we give the stationary point conditions for $\D$ that are needed for proving Theorem ~\ref{thm:dirderiv}.
\begin{lemma}
	Let $B=U\sqrt{\Sigma}$, $C=V\sqrt{\Sigma}$ and $X=U\Sigma V^{\T}$. If $(B,C)$ is a stationary point of $\D$, then
	\begin{align}
	0 &= B \partial G(\Sigma) + \nabla H(BC^T) C, \\
	0 &= \partial G(\Sigma) C^T + B^{\T}\nabla H(BC^T).
	\end{align}
	\label{lemma:statpoints}
\end{lemma}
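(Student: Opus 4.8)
The plan is to prove Lemma~\ref{lemma:statpoints} by computing the gradients of $\D(B,C) = \tilde{G}(B,C) + H(BC^T)$ with respect to $B$ and $C$, using the chain rule, and then evaluating at a point where $B=U\sqrt{\Sigma}$ and $C=V\sqrt{\Sigma}$. The subtlety is that $\tilde{\reg}$ is only differentiable where $f$ is differentiable, i.e. away from the transition $\frac{\|B_i\|^2+\|C_i\|^2}{2}=\sqrt{\mu}$; at such transition columns we must interpret "stationary point'' in terms of the subdifferential, which is exactly why $\partial G$ rather than $\nabla G$ appears in the conclusion.

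First I would write $\tilde{\reg}(B,C) = \sum_{i=1}^{2k} f\!\left(\tfrac{\|B_i\|^2+\|C_i\|^2}{2}\right)$ and compute, column by column, $\pf{}{B_i}\tilde{\reg} = f'\!\left(\tfrac{\|B_i\|^2+\|C_i\|^2}{2}\right) B_i$ (with the analogous expression for $C_i$), where at a transition column $f'$ is replaced by an element of the subdifferential of $t\mapsto f(t)+t^2$ evaluated appropriately. Similarly $\pf{}{B}\|BC^T\|_F^2 = 2 B C^T C$ and $\pf{}{B} H(BC^T) = \nabla H(BC^T)\,C$ by the chain rule, since $\nabla_X \|\A X-b\|^2 = 2\A^*(\A X-b)$ and $\nabla_X(-\|X\|_F^2) = -2X$. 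Collecting the $B$-derivatives of $\tilde{G} = \tilde{\reg} + \|BC^T\|_F^2$ into matrix form, the $i$-th column of the gradient is $\left(f'(\cdot) + 2\right) B_i$ when $B_i,C_i$ are the scaled singular vectors, which one recognizes as $B_i$ times the corresponding diagonal entry of $\partial g(\Sigma)$ — here using that $\|B_i\|^2 = \|C_i\|^2 = \sigma_i$ so $\tfrac{\|B_i\|^2+\|C_i\|^2}{2} = \sigma_i$, and $\partial g(\sigma_i) = f'(\sigma_i)+2\sigma_i$ (or the set-valued version at $\sigma_i=\sqrt{\mu}$ or $\sigma_i=0$). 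This identifies $\nabla_B \tilde{G} = B\,\partial G(\Sigma)$ — where I should invoke Lemma~\ref{lemma:subgrad} to justify that applying $\partial g$ entrywise to the diagonal, scaled by the singular vectors, does give an element of $\partial G(\Sigma)$ — and hence the stationarity condition $0 = B\,\partial G(\Sigma) + \nabla H(BC^T)\,C$. The equation for $C$ follows by the symmetric computation, with $\nabla_C\tilde G = \partial G(\Sigma)\,C^T$ and $\nabla_C H(BC^T) = B^T \nabla H(BC^T)$, using $\nabla H(BC^T)$ symmetric (it is $2\A^*(\A BC^T-b)-2BC^T$, which is symmetric when $BC^T$ and the range of $\A^*\A$ are; more carefully one keeps track of transposes and gets $B^T\nabla H(BC^T)$ on the correct side).

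The main obstacle I expect is handling the transition columns cleanly: at a column where $\tfrac{\|B_i\|^2+\|C_i\|^2}{2}=\sqrt{\mu}$ the function $f$ is not differentiable, so "stationary point of $\D$'' must mean $0$ lies in the (Clarke or convex-composite) subdifferential, and I need the fact — supplied by Lemma~\ref{lemma:subgrad} and the structure $g(x)=f(|x|)+x^2$ — that the relevant subgradient still factors through $\partial g(\Sigma)$ sandwiched by the singular-vector matrices. A second, more routine, technical point is bookkeeping the transposes so that the operator $\nabla H(BC^T)$ appears on the correct side in each equation; this is purely mechanical once the chain rule is applied carefully to $B\mapsto H(BC^T)$ and $C\mapsto H(BC^T)$. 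Since $(B,C) = (U\sqrt\Sigma, V\sqrt\Sigma)$ is assumed (inherited from the hypotheses used in Theorem~\ref{thm:dirderiv}), the columns of $B$ and $C$ are orthogonal and the reduction of $\nabla_B\tilde G$ to $B\,\partial G(\Sigma)$ is exactly the matrix statement of Lemma~\ref{lemma:subgrad}, so no additional work beyond assembling these pieces is needed.
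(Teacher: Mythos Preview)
Your approach is essentially identical to the paper's: compute $\nabla_{B_i}\tilde\reg = f'(\cdot)B_i$, add $\nabla_B\|BC^T\|_F^2 = 2BC^TC = 2B\Sigma$, recognize $f'(\sigma_i)+2\sigma_i$ as $\partial g(\sigma_i)$, and read off the two stationarity equations (the paper also transposes the $C$-equation at the end, exactly as you anticipate). The only slip is the intermediate expression ``$(f'(\cdot)+2)B_i$'', which should be $(f'(\sigma_i)+2\sigma_i)B_i$ since $C^TC=\Sigma$; you use the correct form two lines later, and your extra care about the subdifferential at transition columns is in fact more thorough than the paper's own treatment.
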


We are now ready to prove Theorem~\ref{thm:dirderiv}.

\begin{proof}[Proof of Theorem~\ref{thm:dirderiv}]
Let $\bX=\bB\bC^T$, $\tX = \tB\tC^T$  and $\Delta X = \tB\tC^T - \bB \bC^T$. 
We first note that the limit
\begin{equation}
\N'_{\Delta X}(\bX) = \lim_{t \searrow 0}\frac{\N(\bX+t\Delta X) - \N(\bX)}{t},
\label{eq:limit}
\end{equation}
exists since $\N$ is a sum of a finite convex function $G$ and a differentiable function $H$.
Our goal is now to show that the limit is non-negative.
Suppose that we can find a factorization $B(t)C(t)^T = \bX+t \Delta X$, such that $\reg(\bX + t \Delta X ) = \tilde{\reg}(B(t),C(t))$, $(B(t),C(t))$ is continuous and $(B(0),C(0)) = (\bB,\bC)$. Then for small enough $t$ we have 
\begin{equation}
\N(\bX+t\Delta X)- \N(\bX) = \D(B(t),C(t)) - \D(\bB,\bC).
\end{equation}
This quantity is clearly non-negative since $(\bB,\bC)$ is a local minimizer of $\D$, which would prove that the limit \eqref{eq:limit} is non-negative. It is not difficult to see that this can be done when the two matrices $\bX$ and $\tX$ have singular value decompositions with the same $U$ and $V$. In what follows we will first show that all other cases can be reduced so that the matrices are of this form. When this is done we proceed to construct the factorization $B(t)C(t)^T$ which completes the proof.

The directional derivatives can be computed using the sub-differential 
\begin{equation}
\N'_{\Delta X} = \max_{2Z \in \partial G(\bB\bC^T)} \skal{2Z,\Delta X} + \skal{\nabla H(\bB\bC^T), \Delta X}.
\label{eq:subdiffdir}
\end{equation}
By Lemma~\ref{lemma:subgrad}, the first term becomes
\begin{equation}
\begin{aligned}
\skal{U\partial G(\Sigma) V^T + M,\Delta X} &=
\skal{U\partial G(\Sigma)V^T,\tB\tC^T} \\
& +\skal{M,\tB\tC^T} \\
&-\skal{U \partial G(\Sigma) V^T, \bB\bC^T}.
\label{eq:ddir1}
\end{aligned}
\end{equation}
The columns of $\tB$ can be written as a linear combination of the columns in $\bB$ and those of a matrix $\bB_\perp$ with at most~$k$
columns that are perpendicular to~$\bB$.
Similarly, the columns of $\tC$ can be written as a linear combination of the columns in $\bC$ and those of a matrix $\bC_\perp$ with at most $k$ columns that are perpendicular to~$\bC$.
Therefore, we may write
\begin{equation}
\begin{aligned}
\tB\tC^T &= \begin{bmatrix} \bB & \bB_\perp \end{bmatrix} \begin{bmatrix} K_{11} & K_{12} \\ K_{21} & K_{22} \end{bmatrix} \begin{bmatrix}
\bC^T \\
\bC_\perp^T
\end{bmatrix} \\
&= \bB K_{11}C^T+\bB K_{12}\bC_\perp^T \\
&\qquad+\bB_\perp K_{21} \bC^T + \bB_\perp K_{22} \bC_\perp^T,
\end{aligned}
\end{equation}
where $ \bB^T \bB_\perp = 0$ and $\bC^T \bC_\perp = 0$. Our goal is now to show that the terms $K_{12}$ and $K_{21}$ and the off diagonal elements of $K_{11}$ vanish from \eqref{eq:subdiffdir} and can be assumed to be zero.

For the last term of \eqref{eq:ddir1} we have
\begin{equation}
\begin{aligned}
\skal{U \partial G(\Sigma) V^T, \bB\bC^T} &= \skal{\partial G(\Sigma), U^T\bB\bC^TV } \\
&= \skal{\partial G(\Sigma),\Sigma},
\end{aligned}
\end{equation}
which is clearly independent of $\tB$ and $\tC$.
The first term of \eqref{eq:ddir1} reduces to
\begin{equation}
\begin{aligned}
\skal{U\partial G(\Sigma)V^T,\tB\tC^T} &= \skal{U\partial G(\Sigma)V^T,\bB K_{11}\bC^T} \\
&= \skal{\bB^TU\partial G(\Sigma)V^T\bC,K_{11}} \\
&= \skal{\Sigma \partial G(\Sigma),K_{11}}.
\end{aligned}
\end{equation}
Note that the off diagonal elements of $K_{11}$ vanish from this expression since $\Sigma \partial G(\Sigma)$ is diagonal.
Similarly, the second term of \eqref{eq:ddir1} reduces to
\begin{equation}
\skal{M,\tB\tC^T} = \skal{M, \bB_\perp K_{22} \bC_\perp^T}.
\end{equation}
We now consider the second term of~\eqref{eq:subdiffdir}
\begin{equation}
\begin{aligned}
&\skal{\nabla H(\bB\bC^T), \Delta X} =\\
&\qquad\skal{\nabla H(\bB\bC^T), \bB K_{11}\bC^T+\bB K_{12}\bC_\perp^T \\
&\qquad\qquad+ \bB_\perp K_{21} \bC^T + \bB_\perp K_{22} \bC_\perp^T-\bB\bC^T}.
\label{eq:ddir2}
\end{aligned}
\end{equation}
For the first term we have
\begin{equation}
\begin{aligned}
\skal{\nabla H(\bB\bC^T), \bB K_{11}\bC^T} &= \skal{\nabla H(\bB\bC^T)\bC,\bB K_{11}} \\
&= -\skal{\bB\partial G(\Sigma), \bB K_{11}} \\
&= -\skal{\bB^T \bB\partial G(\Sigma), K_{11}} \\
&= -\skal{\Sigma\partial G(\Sigma), K_{11}}.
\end{aligned}
\end{equation}
Again the off diagonal elements of $K_{11}$ vanish.
For the second term of \eqref{eq:ddir2} we have 
\begin{equation}
\begin{aligned}
\skal{\nabla H(\bB\bC^T), \bB K_{12}C_\perp^T} &= \skal{B^T \nabla H(BC^T),K_{12}C_\perp^T}\\
&= -\skal{\partial G(\Sigma)\bC^T, K_{12}\bC_\perp}\\
&=  -\skal{\partial G(\Sigma)\bC^T\bC_\perp, K_{12}} = 0.
\end{aligned}
\end{equation}
Similarly, the third term is $\skal{\nabla H(\bB\bC^T), \bB_\perp K_{21}\bC^T} = 0$. 
Thus
\begin{equation}
\begin{aligned}
\skal{\nabla H(\bB\bC^T),\Delta X} &= \skal{\nabla H(\bB\bC^T),\bB_\perp^T K_{22} \bC_\perp^T}\\
&  -\skal{\Sigma\partial G(\Sigma), K_{11}} \\
&- \skal{\nabla H(\bB\bC^T),\bB\bC^T}.
\end{aligned}
\end{equation}
Summarizing we see that we have now proven that all the terms in \eqref{eq:ddir1} are independent of $K_{12}$, $K_{21}$ as well as the off diagonal terms of $K_{11}$. They therefore do not affect the value of $\N'_{\Delta X}$ and can be assumed to be zero.
We can now write $\Delta X$ as
\begin{equation}
\Delta X = 
\begin{bmatrix}
U & U_\perp
\end{bmatrix}
\begin{bmatrix}
(D-I)\Sigma & 0 \\
0 & \tilde{\Sigma}
\end{bmatrix}
\begin{bmatrix}
V^T \\ 
V_\perp^T
\end{bmatrix},
\end{equation}
where $D$ are the diagonal elements of $K_{11}$ and $U_\perp \tilde{\Sigma} V_\perp^T$ is the SVD of $\bB_\perp K_{22} \bC_\perp^T$. Note that $U_\perp^T U = 0$ since $U$ and $U_\perp$ span orthogonal subspaces. Similarly $V_\perp^T V = 0$.

We now consider the directional derivative \eqref{eq:limit} with $\bB= U\sqrt{\Sigma}$, $\bC=V\sqrt{\Sigma}$.
It is clear that for small $t$ the matrix $\bX + t \Delta X$ has the singular value decomposition
\begin{equation}
\begin{bmatrix}
U & U_\perp
\end{bmatrix}
\begin{bmatrix}
((1-t)I+tD)\Sigma & 0 \\
0 & t\tilde{\Sigma}
\end{bmatrix}
\begin{bmatrix}
V^T \\ 
V_\perp^T
\end{bmatrix}.
\end{equation}
We now let
\begin{align}
B(t) &= \begin{bmatrix}
U & U_\perp
\end{bmatrix}
\sqrt{
\begin{bmatrix}
((1-t)I+tD)\Sigma & 0 \\
0 & t\tilde{\Sigma}
\end{bmatrix}},
\\
C(t) &= \begin{bmatrix}
V & V_\perp
\end{bmatrix}
\sqrt{
	\begin{bmatrix}
((1-t)I+tD)\Sigma & 0 \\
	0 & t\tilde{\Sigma}
	\end{bmatrix}}.
\end{align}
Then, we clearly have $\tilde{\reg}(B(t),C(t)) = \reg(X+t\Delta X)$ for small enough $t$,
which completes the proof.
\end{proof}

Next we will prove Theorem~\ref{thm:lowrank-opt}.
Our results build on those of~\cite{olsson-etal-iccv-2017} and we remind the reader that we exclusively use
$f_\mu(\sigma) = \mu - \max(\sqrt{\mu}-\sigma,0)^2$ throughout this section, but suppress the subscript~$\mu$.
We will use the fact that the directional derivatives in a local minimum are non-negative for all low rank directions to show that $(\bB,\bar{C})$ minimizes the non-convex $\N$ over matrices of $\rank < k$ in Theorem~\ref{thm:lowrank-opt}. For this we will need the following result:

\begin{lemma}\label{lemma:lowrankopt}
	If $\bar{X}$ is a solution to
	$\min_{\rank(X) \leq k}\C(X)$ with $\rank(\bar{X})<k$ 
	and the singular values of $Z$ fulfill $\sigma_i(Z) \notin [(1-\delta_{2k})\sqrt{\mu}, \frac{\sqrt{\mu}}{(1-\delta_{2k})}]$
	then $\bar{X}$ also solves $\min_X\C(X)$.
\end{lemma}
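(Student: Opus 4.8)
The plan is to exploit that $\C(X) = \reg(X) + \|X - Z\|_F^2$ is convex (by the choice $f = f_\mu$ and $\|\A\| \le 1$, cf.\ \cite{carlsson2016convexification}), so a point $\bar X$ is a global minimizer of $\min_X \C(X)$ if and only if $0 \in \partial \C(\bar X)$, i.e.\ $2(Z - \bar X) \in \partial G(\bar X)$ where $G(X) = \reg(X) + \|X\|_F^2$ as in \eqref{eq:Gdef}. The hypothesis gives us only that $\bar X$ minimizes $\C$ over the smaller set $\{\rank(X) \le k\}$, which yields non-negativity of all directional derivatives of $\C$ at $\bar X$ in directions $\Delta X$ with $\rank(\bar X + \Delta X) \le k$. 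Since $\rank(\bar X) < k$, these ``low rank'' directions include, in particular, every rank-one matrix $u v^T$ and every direction supported on the row/column spaces of $\bar X$. The first step is therefore to translate this restricted optimality into the algebraic statement $2(Z-\bar X) \in \partial G(\bar X)$, using Lemma~\ref{lemma:subgrad} to describe $\partial G(\bar X)$ explicitly in terms of the SVD $\bar X = U\Sigma V^T$: one needs $2(Z-\bar X)$ to act like $U(\partial g(\Sigma) - 2\Sigma)V^T$ on the ``inside'' block and to have spectral norm $\le 2\sqrt\mu$ on the complementary block $U_\perp^T(\cdot)V_\perp$.

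Concretely, I would decompose $Z$ relative to the subspaces of $\bar X$. Writing $\bar X = U\Sigma V^T$ with $\Sigma$ invertible ($r = \rank(\bar X) < k$), the restricted optimality in directions $U \, \mathrm{diag}(\delta) \, V^T$ forces $\sigma_i(\bar X) = $ the scalar soft/hard-threshold of the corresponding ``diagonal part'' of $Z$, so that on the inside block the subgradient condition for $g$ is met; and the restricted optimality in rank-one directions $u_\perp v_\perp^T$ with $u_\perp \perp U$, $v_\perp \perp V$ forces the derivative of $\sigma \mapsto f(\sigma) + \sigma^2 - 2\sigma\, (u_\perp^T Z v_\perp)$ at $\sigma = 0^+$ to be non-negative, i.e.\ $|u_\perp^T Z v_\perp| \le \sqrt\mu$ for all such unit vectors, hence $\sigma_1\big(U_\perp^T Z V_\perp\big) \le \sqrt\mu$. (One also has to handle ``mixed'' rank-one directions $u\, v_\perp^T$ with $u \in \mathrm{range}(U)$; these are still rank $\le k$ and they pin down the off-diagonal/coupling blocks of $Z$, forcing $U^T Z V_\perp = 0$ and $U_\perp^T Z V = 0$ exactly as in the stationarity computation of Theorem~\ref{thm:dirderiv}.) Putting these together shows $2(Z - \bar X) = U(\partial g(\Sigma) - 2\Sigma)V^T + M$ with $U^T M = 0$, $M V^T = 0$ and $\sigma_1(M) = 2\sigma_1(U_\perp^T Z V_\perp) \le 2\sqrt\mu$, which by Lemma~\ref{lemma:subgrad} is exactly membership in $\partial G(\bar X)$.

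The one place where the RIP-type hypothesis $\sigma_i(Z) \notin [(1-\delta_{2k})\sqrt\mu,\ \sqrt\mu/(1-\delta_{2k})]$ enters — and the step I expect to be the main obstacle — is in the inside block: the restricted minimality only tells us $\bar X$ is optimal among rank-$\le k$ matrices, and the scalar proximal map of $f_\mu$ is \emph{not} single-valued at the threshold $\sqrt\mu$, so without a gap condition there could be a direction of descent leaving the rank-$\le k$ set, or the diagonal part of $Z$ could sit exactly at $\sqrt\mu$ making the ``threshold'' ambiguous and breaking the subgradient identity. The spectral-gap hypothesis on $Z$ (together with the standard perturbation inequality $\sigma_i(U_\perp^T Z V_\perp) \ge (1-\delta_{2k})\sigma_i(Z - \bar X \text{-corrections})$ coming from RIP, as used in \cite{olsson-etal-iccv-2017}) rules this out: it guarantees that each diagonal entry of $Z$ is safely on one side of $\sqrt\mu$, so $\bar X$'s singular values are uniquely determined and the complementary-block bound $\le \sqrt\mu$ is strict enough to conclude. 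So the plan is: (1) write down restricted directional-derivative non-negativity; (2) specialize to diagonal, rank-one-perpendicular, and mixed rank-one directions to recover the three pieces of the subgradient condition; (3) invoke the gap hypothesis exactly where the non-smoothness of $f_\mu$ at $\sqrt\mu$ would otherwise obstruct uniqueness; (4) assemble via Lemma~\ref{lemma:subgrad} and convexity of $\C$ to conclude $0 \in \partial\C(\bar X)$, hence $\bar X$ minimizes $\C$ globally.
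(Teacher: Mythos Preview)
Your plan is workable but considerably more elaborate than the paper's, and it also misidentifies where the difficulty lies. The paper's proof is essentially a one-step reduction: by von Neumann's trace inequality, minimizing $\C(X)=\reg(X)+\|X-Z\|_F^2$ (with or without the rank constraint) forces $X$ to share singular vectors with $Z$, so the problem decouples into independent scalar problems $\min_{\sigma_i(X)\ge 0}\bigl[-\max(\sqrt\mu-\sigma_i(X),0)^2+(\sigma_i(X)-\sigma_i(Z))^2\bigr]$. These have the explicit solution $\sigma_i(X)=\sigma_i(Z)$ if $\sigma_i(Z)>\sqrt\mu$ and $\sigma_i(X)=0$ otherwise. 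The constraint $\rank(X)\le k$ then becomes ``at most $k$ nonzero $\sigma_i(X)$'', and a two-case check shows that any rank-constrained minimizer with rank $<k$ already coincides with the unconstrained one. No subgradient calculus, no direction-by-direction analysis, and no RIP.

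On that last point: your step (3), invoking RIP and a perturbation inequality for $\sigma_i(U_\perp^T Z V_\perp)$, is misplaced. This lemma concerns only $\C(X)=\reg(X)+\|X-Z\|_F^2$; the operator $\A$ and its RIP constant do not appear at all---they enter later, in the proof of Theorem~\ref{thm:lowrank-opt}. The gap hypothesis $\sigma_i(Z)\notin[(1-\delta_{2k})\sqrt\mu,\sqrt\mu/(1-\delta_{2k})]$ is carried in the statement for later use; in the paper's argument it serves only to avoid the borderline case $\sigma_i(Z)=\sqrt\mu$ where the scalar minimizer is non-unique, and even that is a convenience rather than a genuine obstruction. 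Your subgradient-verification route (specializing restricted directional derivatives to diagonal, off-diagonal, mixed, and perpendicular rank-one directions, then assembling via Lemma~\ref{lemma:subgrad}) would also succeed and does not in fact need the gap either, but it incurs the extra work of handling repeated singular values and of establishing that $\bar X$ and $Z$ share singular vectors on the inside block---precisely what von Neumann delivers for free.
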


\begin{proof}[Proof of Lemma~\ref{lemma:lowrankopt}]
By von Neumann's trace theorem it is easy to see that the problem $\min_{\rank(X) \leq k}\C(X)$ reduces to a minimization over the singular values of $X$.
We should thus find $\sigma_i(X)$ such that
\begin{equation}
\sum_{i=1}^n \underbrace{- \max(\sqrt{\mu} - \sigma_i(X),0)^2 + (\sigma_i(X)-\sigma_i(Z))^2}_{:=g_i(\sigma_i(X))}
\end{equation}
is minimized and at most $k$ singular values are non-zero.
The unconstrained minimizers of $g_i$ can be written down in closed form:
If $0 \leq \sqrt{\mu} < \sigma_i(Z)$ then $\sigma_i(X) = \sigma_i(Z)$ is optimal giving $g_i(\sigma_i(X)) = 0$.
If $0 \leq \sigma_i(Z) < \sqrt{\mu} $ then $\sigma_i(X) = 0$ is optimal giving $g_i(\sigma_i(X)) = -\mu+\sigma_i(Z)^2$.
Hence for any solution of $\min_{\rank(X) \leq k} \C(X)$ we have $\sigma_i(X) = 0$ if 0 $\leq \sigma_i(Z)\leq \sqrt{\mu}$.
There are now two cases:
\begin{enumerate}
\item If $\sigma_{k+1}(Z) < \sqrt{\mu}$ then the sequence of unconstrained minimizers has at most $k$ non-zero values.
Thus, in this case the resulting $X$ solves both $\min_X \C(X)$ and $\min_{\rank(X) \leq k} \C(X)$.

\item If $\sigma_{k+1}>\sqrt{\mu}$ we will not be able to select $\sigma_i(X) = \sigma_i(Z)$ for all $i$ where $0 \leq \sqrt{\mu} < \sigma_i(Z)$. Choosing $\sigma_i(X) = 0$ gives $g_i(0)=-\mu+\sigma_i(Z)^2<0$. Since $\sigma_i(Z)$ is decreasing with $i$ it is clear that the smallest value is obtained when selecting $\sigma_i(X) = \sigma_i(Z)$ for $i = 1,...,k$.
\end{enumerate}
We now conclude that if $\rank(\bar{X}) < k$ then we are in case~1 and therefore $\bar{X}$ solves the unconstrained problem.
\end{proof}

We are now ready to give the proof of Theorem~\ref{thm:lowrank-opt}.

\begin{proof}[Proof of Theorem~\ref{thm:lowrank-opt}]
Since $\C$ and $\N$ has the same subdifferential (see \cite{larsson-olsson-ijcv-2016}) at $\bX=\bar{B} \bar{C}^T$ it is clear that the directional derivatives 
$\C'_{\Delta X}(\bar{X}) = \N'_{\Delta X}(\bar{X}) \geq 0$, where $\Delta X = \tilde{X}-\bar{B}\bar{C}^T$ and $\rank(\tilde{X})\leq k$. By convexity of $\C$ it is then also clear that 
\begin{equation}
\bar{B}\bar{C}^T \in \argmin_{\rank(X)\leq k} \C(X).
\end{equation}
Since $\rank(\bar{B}\bar{C}^T) < k$, $\bar{B}\bar{C}^T$ is also the unrestricted global minimizer of $\C(X)$ according to Lemma \ref{lemma:lowrankopt}.
By Lemma 3.1 of \cite{olsson-etal-iccv-2017} it is then a stationary point of
$\N(X)$. 

What remains now is to prove that $\bar{X} = \bar{B} \bar{C}^T$ is a global minimizer of $\N$ over all line segments $\bar{X} + t\Delta X$.
This can be done by estimating the growth of the directional derivatives along such lines. 
For this purpose we consider the functions $G$ and $H$ defined as in \eqref{eq:Gdef} and \eqref{eq:Hdef}.
Note that $\bar{X}$ is a stationary point of $\N(X) = G(X)+H(X)$ if and only if $-\nabla H(\bar{X}) = 2Z \in \partial G(\bar{X})$.

Since $\nabla H(\bar{X}+t \Delta X)-\nabla H(\bar{X}) = t \nabla H(\Delta X) = 2t(\A^*\A\Delta X-\Delta X)$ we have
\begin{equation}
\begin{split}
\skal{\nabla H(\bar{X}+t \Delta X)-\nabla H(\bar{X}), t\Delta X} = \\2t^2 (\|\A \Delta X\|^2-\|\Delta X\|_F^2),
\end{split}
\end{equation}
and due to RIP $\|\A \Delta X\|^2-\|\Delta X\|_F^2 \geq -\delta_{2r}\|\Delta X\|^2$.
From Corollary 4.2 of \cite{olsson-etal-iccv-2017} we see that for any $2Z' \in \partial G(\bar{X} + t\Delta X)$ we have 
\begin{equation}
\skal{Z'-Z,t\Delta X} >  t^2 \delta_{2r}\|\Delta X\|_F^2,
\end{equation}
as long as $t \neq 0$.
Since $G'_{\Delta X}(X) = \max_{2Z \in \partial G(X)}\skal{2Z,\Delta X}$, $H'_{\Delta X}(X) = \skal{\nabla H(X),\Delta X}$ and 
$2Z + \nabla H'(\bar{X}) = 0$ we get
\begin{equation}
\N'_{\Delta X}(\bar{X}+t\Delta X) \geq 
\skal{2Z'+\nabla H(\bar{X}+t\Delta X),\Delta X} > 0
\end{equation}
This shows that $\bX$ solves \eqref{eq:regminprobl}.
That $\bX$ also solves \eqref{eq:rankminprobl} is now a consequence of the fact that $\reg(X) \leq \mu \rank(X)$ with equality if $X$ have no singular values in the interval $(0,\sqrt{\mu}]$.
Note that $\bX$ is the unrestricted minimizer of $\C(X)$, where the singular values of $Z$ fulfill $\sigma_i(Z) \notin \left[(1-\delta_{2k})\sqrt{\mu}, \frac{\sqrt{\mu}}{1-\delta_{2k}}\right]$. Since the solution to this problem is hard thresholding $\bX$ has no singular values in $\left(0, \frac{\sqrt{\mu}}{1-\delta_{2k}}\right] \supset (0,\sqrt{\mu}]$.
\end{proof}

For completeness we give the proofs that were previously omitted.
\begin{proof}[Proof of Lemma~\ref{lemma:subgrad}]
With some abuse of notation we define the function $g:\mathbb{R}^n \rightarrow \mathbb{R}$ by $g({\bf{x}}) = \sum_{i=1}^n g(x_i)$, where $x_i$, $i=1,...,n$ are the elements of $\bf{x}$ and $g(x) = f(|x|)+x^2$.
The function $g$ is an absolutely symmetric convex function and $G$ can be written $G(X) = g\circ \sigmavec(X)$, where $\sigmavec(X)$ is the vector of singular values of $X$.
Then according to \cite{lewis1995convex} the matrix $Y\in\partial G(X)$ if and only if 
$Y = U'  \diag(\partial g \circ \sigmavec(X) ) V'^T$ when $X = U'  \diag(\sigmavec(X) ) V'^T$.
(Here we use the full SVD with square orthogonal matrices $U'$ and $V'$.)
Now given a thin SVD $X=U\Sigma V^T$ all possible full SVD's of $X$ can be written
\begin{equation}
X = \begin{bmatrix}
U & U_\bot
\end{bmatrix}
\begin{bmatrix}
\Sigma & 0 \\
0 & 0
\end{bmatrix}
\begin{bmatrix}
V^T \\ V^T_\bot
\end{bmatrix},
\end{equation}
where $U_\perp$ and $V_\perp$ are singular vectors corresponding to singular values that are zero.
Note that $U_\perp$ and $V_\perp$ are not uniquely defined since their corresponding singular values are all zero.
Therefore we get
\begin{equation}
\begin{aligned}
Y &= \begin{bmatrix}
U' & U_\bot
\end{bmatrix}
\begin{bmatrix}
\partial g(\Sigma) & 0 \\
0 & D
\end{bmatrix}
\begin{bmatrix}
V'^T \\ V^T_\bot
\end{bmatrix} \\
&= U' \partial g(\Sigma)V'^T+ U_\perp D V_\perp^T,
\end{aligned}
\end{equation}
where $D$ is a diagonal matrix with elements in $2\sqrt{\mu}[-1,1]$. 
It is clear that $\sigma_1(U_\perp D V_\perp^T)= \sigma_1(D) \leq 2\sqrt{\mu}$.
Furthermore, since $U_\perp$ and $V_\perp$ can be any orthogonal bases of the spaces perpendicular to the column and row spaces of $X$, it is clear that any matrix $M$ fulfilling $U^T M = 0$, $M V = 0$ and $\sigma_1(M) \leq 2\sqrt{\mu}$ can be written $M = U_\perp D V_\perp^T$, hence
	\begin{equation}
\begin{aligned}
\partial G(X) &= \{ U\partial g(\Sigma) V^{\T} + M\;:\;\sigma_1(M)\leq 2\sqrt{\mu} ,\\
&\qquad\qquad\qquad\;U^{\T}M=0,\;MV=0 \}.
\end{aligned}
\end{equation}

\end{proof}

\begin{proof}[Proof of Lemma~\ref{lemma:statpoints}]
The gradients of $\tilde{G}$ are given by
\begin{equation}
\nabla_B \tilde{G}(B,C) = \nabla_B(\tilde{\reg}(B,C))+\nabla_B (\|BC^T\|_F^2).
\end{equation}
For the first term we get
\begin{equation}
\nabla_{B_i} \tilde{\reg}(B,C) = f'\left(\frac{\|B_i\|^2+\|C_i\|^2}{2}\right)B_i.
\end{equation}
With $B = U\sqrt{\Sigma}$ and $C = V\sqrt{\Sigma}$ we get
\begin{equation}
\nabla_B \tilde{\reg}(B,C) = B \begin{bmatrix}
f'(\sigma_1) & 0 & \hdots \\
0 & f'(\sigma_2) &  \hdots \\
\vdots & \vdots & \ddots
\end{bmatrix} = B f'(\Sigma),
\end{equation}
which gives
\begin{equation}
\nabla_B \tilde{G}(B,C) = B f'(\Sigma) + 2BC^TC = B(f'(\Sigma)+2\Sigma).
\end{equation}
For a non-zero $\sigma$ we have $\partial g(\sigma)=\{f'(\sigma) + 2\sigma\}$ and therefore
\begin{equation}
\nabla_B \tilde{G}(B,C) = B(\partial G (\Sigma)),
\end{equation}
where $g(X) = \reg_{\mu}(X)+\|X\|_F^2$.
Similarly we get
\begin{equation}
\nabla_C \tilde{G}(B,C) = C(\partial G(\Sigma)).
\end{equation}
If $(B,C)$ is a stationary point then
\begin{align}
0 &= B \partial G(\Sigma) + \nabla H(BC^T) C, \\
0 &= C \partial G(\Sigma) + (\nabla H(BC^T))^T B. 
\end{align}
The second equation can be re-written to the form stated in the lemma.
\end{proof}

\section{Implementation Details}\label{sec:implementationdetails}
In this section we present some more details on our Iteratively Reweighted VarPro approach.
Recall that our approach consists of three main steps.
In the first step we make a quadratic approximation \eqref{eq:weighedapprox} of the regularization term by replacing $\tilde{\reg}(B,C)$ with 
$\sum_{i=1}^k w_i^{(t)}\left(\|B_i\|^2+\|C_i\|^2\right)$ as described in Section~\ref{sec:implement}. 

In the second step we apply one step of VarPro with the Ruhe Wedin approximation, see~\cite{hong-etal-cvpr-2017} for details on the implementation.
VarPro uses Jacobians with respect to both the $B$ and $C$ parameters.
In our case we have two terms that needs to be linearized.
The regularization term can be written
\begin{equation}
\|\diag(w^{(t)}) B\|_F^2 + \|\diag(w^{(t)}) C\|_F^2,
\end{equation}
where $\diag(w^{(t)})$ is a diagonal matrix with the weights $w_i^{(t)}$ in the diagonal.
The residuals $\diag(w^{(t)}) B$ are already linear and by column stacking the variables we can write them as $J^\textsf{reg}_B \vec{b}$, where $\vec{b}$ is a column stacked version of~$B$. If $B$ has~$k$ columns the matrix $J^\textsf{reg}_B$ will consist of~$k$ copies of the matrix $\diag(w^{(t)})$. Additionally, each row of $J^\textsf{reg}_B$ has only one non-zeros element making the matrix
extremely sparse. Similarly, we obtain the contribution due to the second bilinear factor$C$, which can be written as $J^\textsf{reg}_C \vec{c}$. Here we use $\vec{c}=\text{vec}(C^\T)$, as it
alleviates the computations of the data terms, hence $J^\textsf{reg}_C$ consists of a $k$ copies
of $\diag(w^{(t)})$ permuted to match this design choice.
Given a current iterate $(\vec{b}^{(t)},\vec{c}^{(t)})$ we write the regularization term as
$\|J_B^{\textsf{reg}}\delta \vec{b} + \vec{r}_B\|^2+\|J_C^{\textsf{reg}}\delta \vec{c} + \vec{r}_C\|^2$, where $\vec{r}_B = J_B^{\textsf{reg}} \vec{b}^{(t)}$, $\vec{r}_C = J_C^{\textsf{reg}} \vec{c}^{(t)}$, 
$\vec{b} = \vec{b}^{(t)}+\delta \vec{b}$ and $\vec{c} = \vec{c}^{(t)}+\delta \vec{c}$.

Linearizing the residuals $\A B C^T - b$ around $(\vec{b}^{(t)},\vec{c}^{(t)})$ gives an expression of the form
\begin{equation}
J_B^{\textsf{data}}\delta \vec{b} + J_C^{\textsf{data}}\delta \vec{c} + \vec{r}^{\textsf{data}}.
\end{equation}
The particular shape of the Jacobians in this expression depends on the application; however, in all of our applications they are sparse. 
For example, in the missing data problem each residual corresponds to an element of the matrix $X$ which in turn only depends on $k$ elements of $B$ and $C$.
Locally we may now write the objective function as
\begin{equation}
\|J_B \delta \vec{b} + J_C \delta \vec{c} + \vec{r}\|^2,
\label{eq:LMsys}
\end{equation}
where 
\begin{equation}
\small
J_B = \left[\begin{array}{c}
J_B^\textsf{reg}  \\
0   \\
J_B^\textsf{data} \\
\end{array}\right],
\
J_C = \left[\begin{array}{ccc}
0 \\
J_C^\textsf{reg} \\
J_C^\textsf{data} \\
\end{array}\right],
\
\vec{r} = 
\left[
\begin{array}{c}
\vec{r}_B \\
\vec{r}_C \\
\vec{r}^{\textsf{data}}
\end{array}
\right].
\end{equation}
It was shown in \cite{hong-etal-eccv-2016} that each step of VarPro is equivalent to first minimizing \eqref{eq:LMsys} with the additional dampening term $\lambda \|\delta \vec{b}\|^2$ and then performing an exact optimization of \eqref{eq:weighedapprox} over the $C$-variables (when fixing the $B$-variables to their new values). 
Since we also have a reweighing we only do one iteration with VarPro before updating the weights $w^{(t)}$. 

The above procedure can return stationary points for which $\tilde{\reg}(B,C) > \reg(B C^T)$. Our last step is designed to escape such points by taking the current iterate and recompute the factorization of $\bar{B}\bar{C}^T$ using SVD. If the SVD of $\bar{B}\bar{C}^T= \sum_{i=1}^r \sigma_i U_i V_i^T$ we update $\bar{B}$ and $\bar{C}$ to $\bar{B}_i = \sqrt{\sigma_i}U_i$ and $\bar{C}_i = \sqrt{\sigma_i}V_i$ which we know reduces the energy and gives $\tilde{\reg}(\bar{B},\bar{C}) = \reg(\bar{B}\bar{C}^T)$.
Therefore we proceed by refactorizing the current iterate using SVD in each iteration.
The detailed steps of the bilinear method are summarized in Algorithm~\ref{alg:1}.

\begin{algorithm}[h]
\small
\SetAlgoLined
\DontPrintSemicolon
 \KwIn{Robust penalty function $f$, linear operator $\mathcal{A}$ and regularization parameter~$\mu$,
damping parameter~$\lambda$.}
 Initialize $B$ and $C$ with random entries\;
 \While{not converged}{
  Compute weights $w^{(t)}$ from current iterate $(B,C)$\;
  Compute the vectorizations $\vec{b} = \text{vec}(B)$,  $\vec{c} = \text{vec}(C^\T)$\;
  Compute residuals $\vec{r}_B$ $\vec{r}_C$, and Jacobians $J_{B}^\textsf{data}$ and $J_{B}^\textsf{data}$ depending on~$\mathcal{A}$\;
  Compute residual $\vec{r}^\textsf{reg}$, and Jacobians $J_{B}^\textsf{reg}$ and $J_{C}^\textsf{reg}$\;
Create full residual $\vec{r}$ and Jacobians $J_{B}$ and $J_{C}$\;

    Compute $\tilde{J}^{\T}\tilde{J}+\lambda I  = J_{B}^{\T}(I-J_{C}J_{C}^{+})J_{B}+\lambda I$\;
    Compute $\vec{b}' = \vec{b} - (\tilde{J}^{\T}\tilde{J}+\lambda I)^{-1}J_{B}r$ and reshape into matrix $B'$\;
    Compute $C'$ by minimizing \eqref{eq:weighedapprox} with fixed $B'$\;

  \eIf{$\reg(B'{C'}^T)+\|\A (B'{C'}^T) - b\|^2<\reg(B{C}^T)+\|\A (B{C}^T) - b\|^2$}{
  		$[U,\Sigma,V] = \text{svd}(B'{C'}^T)$\;
        Update $B=U\sqrt{\Sigma}$ and $C=V\sqrt{\Sigma}$\;
        Decrease $\lambda$\;
   }{
    Increase $\lambda$\;
  }
 }
\caption{Outline of the bilinear method.}
\label{alg:1}
\end{algorithm}

\section{Additional Experiments on Real Data}\label{sec:moreexp}

\subsection{pOSE: Psuedo Object Space Error}
In this section we compare the energies over time for ADMM optimizing the same
energy~\cite{larsson-olsson-ijcv-2016}, \ie{}~with the regularizer $\reg$, and $f=f_\mu$
as in~\eqref{eq:fmu} (of the main paper), and our proposed method.
We let the bilinear method run until convergence, and let ADMM execute the same time
in seconds. As a comparison we use the nuclear norm relaxation and the discontinuous rank
regularization. The results of the experiment are shown in Figure~\ref{fig:pose1}.

Again, note that the bilinear method optimizes the same energy as ADMM-$\reg_\mu$, and that,
despite the initial fast lowering of the objective value, the ADMM approach fails to
reach the global optimum, within the allotted 150 seconds.
This holds true for all methods employing ADMM.
In all experiments, the control parameter $\eta=0.5$, and the~$\mu$ parameter
was chosen to be smaller than all non-zero singular values of the best known
optimum (obtained using VarPro).
For a fair comparison, the $\mu$-value for the nuclear norm relaxation,
was modified due to the shrinking bias, and was chosen to be the smallest value
of~$\mu$ for which a solution with accurate rank was obtained. Due to this modification,
the energy it minimizes is not
directly correlated to the others, but is shown for completeness.
Furthermore, the iteration speed of ADMM is significantly
faster than for VarPro, and therefore we show the elapsed time (in seconds) for all methods.
The reported values are averaged over 50 instances with random initialization.

\begin{figure}[h!]
\centering
\includegraphics[width=0.495\textwidth]{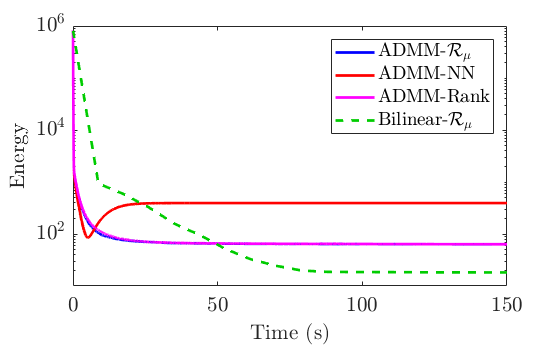}\\
\caption{The average energy for the pOSE problem over 50 instances with random initializations,
for test sequence \emph{Door}. (Note that the energy for ADMM-Rank and ADMM-$\reg_\mu$ are very similar).}
\label{fig:pose1}
\vspace{-0.2cm}
\end{figure}

\subsection{Background Extraction}
The missing data problem formulation can also be used in \eg{}~background extraction, where the goal
is to separate the foreground from the background in a video sequence.
For this experiment, security footage of an airport is used.
The frame size is $144\times 176$ pixels, and we use
the first 200 frames, as in~\cite{he-etal-cvpr2012}.
The camera does not move, hence the background is static.

By
concatenating the vectorization of the frames into a matrix we expect it to be additively
decomposable in terms of a low rank matrix (background) and a
sparse matrix (foreground).
We follow the setup
used in~\cite{cabral-etal-iccv-2013}, and crop the width to half of the height,
and shift it 20 pixels to the right after 100 frames
to simulate a virtual pan of the camera.
This increases the complexity of the background,
as it is no longer static.
Lastly, we randomly drop 70 \% of the entries.
To allow for smaller singular values, we use Geman, as it is a robust penalty
with shrinking bias. The results are shown in Figure~\ref{fig:virtualpan}.

\begin{figure}[h!]
\centering
\includegraphics[width=0.495\textwidth]{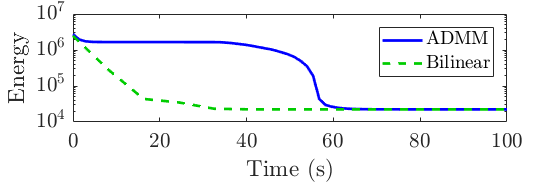}
\caption{Energy minimization comparison for the background extraction experiment.}
\label{fig:background_comp}
\end{figure}

Initially ADMM struggles to find the correct balance between lowering the
rank and fitting the data, which is seen in Figure~\ref{fig:background_comp}, where the objective
is almost unaffected the first forty seconds.
At this point, the bilinear method has already converged.

\begin{figure*}[t]
\centering
\setlength\tabcolsep{0.03575cm}
\def\w{23.1mm}
\def\arraystretch{0.5}
\begin{tabular}{cccccc}
\includegraphics[width=\w]{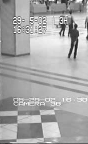} &
\includegraphics[width=\w]{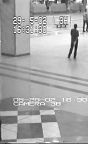} &
\includegraphics[width=\w]{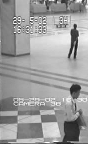} &
\includegraphics[width=\w]{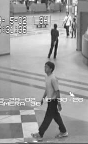} &
\includegraphics[width=\w]{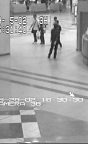} &
\includegraphics[width=\w]{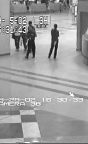}
\\
\includegraphics[width=\w]{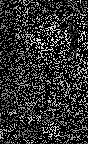} &
\includegraphics[width=\w]{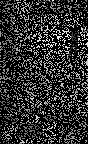} &
\includegraphics[width=\w]{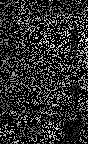} &
\includegraphics[width=\w]{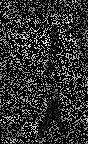} &
\includegraphics[width=\w]{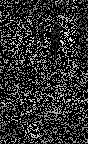} &
\includegraphics[width=\w]{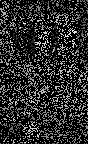}
\\
\includegraphics[width=\w]{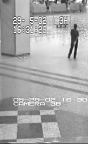} &
\includegraphics[width=\w]{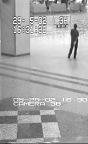} &
\includegraphics[width=\w]{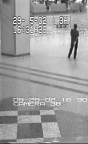} &
\includegraphics[width=\w]{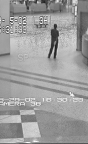} &
\includegraphics[width=\w]{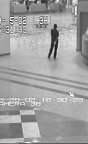} &
\includegraphics[width=\w]{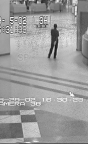}
\end{tabular}
\caption{Background extraction using Geman.
Samples from frame no. 40, 70, 100, 130, 170 and 200.
\emph{Top row:} Original images.
\emph{Middle row:} Training data with 70 \% missing data.
\emph{Bottom row:} Reconstruction of background (bilinear method).}
\label{fig:virtualpan}
\end{figure*}

\begin{figure*}[b]
\centering
\def\w{38mm}
\begin{tabular}{cccc}
\includegraphics[width=\w]{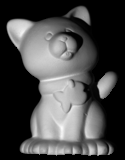} &
\includegraphics[width=\w]{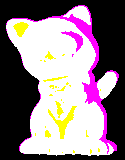} &
\includegraphics[width=\w]{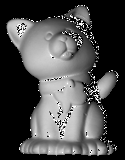} &
\includegraphics[width=\w]{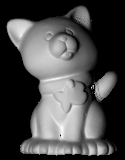}
\\
\includegraphics[width=\w]{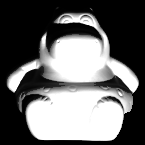} &
\includegraphics[width=\w]{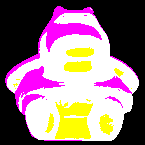} &
\includegraphics[width=\w]{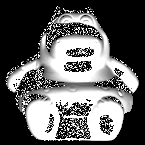} &
\includegraphics[width=\w]{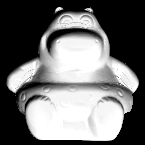}
\\
\includegraphics[width=\w]{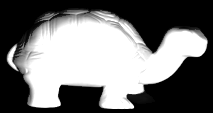} &
\includegraphics[width=\w]{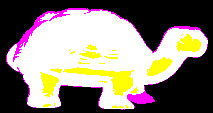} &
\includegraphics[width=\w]{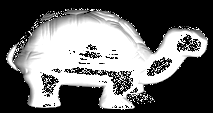} &
\includegraphics[width=\w]{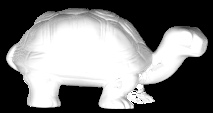}
\end{tabular}
\caption{Images from the photometric stereo experiment. \emph{From left to right}:
(a) Ground truth image,
(b) missing data mask with static background (black), dark pixels (purple), bright pixels (yellow),
(c) reconstruction using ADMM, and
(d) reconstruction using the Bilinear formulation.}
\label{fig:photo_cat}
\end{figure*}

\subsection{Photometric Stereo}
Photometric stereo can be used for estimating depth and surface orientation from images of the
same object and view with varying lighting directions. Assuming~$M$ lighting directions
and~$N$ pixels define $I\in\R^{M\times N}$, where $I_{ij}$ is the light intensity for lighting
direction~$i$ and pixel~$j$. Assuming Lambertian reflectance, uniform
albedo and a distant light source, $I=LN$,
where $L\in\R^{M\times 3}$ contain the lighting directions and~$N\in\R^{3\times N}$ the unknown surface
normals. Thus, the resulting problem is to find a rank 3 approximation of the intensity matrix~$I$.

We use the Harvard Photometric Stereo testset~\cite{Frankot88amethod}, which contains
images of various objects from varying lighting direction.
The images are scaled to $160\times 125$ pixels, and only the foreground
pixels are used in the optimization. Similar to~\cite{cabral-etal-iccv-2013}, we introduce
missing data by thresholding dark pixels with pixel
value less than~40
and bright pixels with pixel value more than~205. The measurement matrix is reconstructed
using the bilinear method and the ADMM equivalent with the~$\reg_\mu$ regularization.
The result is shown in Figure~\ref{fig:photo_cat}.
We let the bilinear method run until convergence and
let the ADMM equivalent run for the same time in seconds, at which point the objective value
is still decreasing when ADMM is interrupted; however, the reduction is almost negligible.
In all cases ADMM fails to converge to
a low rank solution in the same time as the bilinear method, which yields a consistent result.

\clearpage


\end{document}